\theoremstyle{plain}
\theoremstyle{definition}
\newtheorem{definition}{Definition}
\newmdtheoremenv{definition_frame}{Definition}
\definecolor{myblue}{RGB}{0,115,230}
\definecolor{myred}{RGB}{181,25,99}
\definecolor{mygreen}{RGB}{91,163,0}
\newcommand{\tightdots}{.\kern-0.1em.\kern-0.1em.}
\newcommand{\tightcdots}{%
    \raisebox{0.5ex}{.}\kern-0.1em%
    \raisebox{0.5ex}{.}\kern-0.1em%
    \raisebox{0.5ex}{.}%
}
\newcommand{\ff}[1]{}
\newcommand{\zw}[1]{}
\newcommand{\zwT}[1]{}
\icmltitlerunning{Is In-Context Learning in Large Language Models Bayesian? A Martingale Perspective}
\newcommand{\R}{\mathbb{R}}
\newcommand{\Eb}{\mathbb{E}}
\theoremstyle{definition}
\newtheorem{fact}{Fact}
\theoremstyle{remark}
\newtheorem{example}{Example}
\newcommand*\colourcross[1]{%
  \expandafter\newcommand\csname #1cross\endcsname{\textcolor{#1}{\ding{55}}}%
}
\newif\ifdraft
\definecolor{MidnightBlue}{rgb}{0.1, 0.1, 0.44}
\newcommand{\mb}[1]{\mathbb{#1}}
\definecolor{darkgreen}{RGB}{1,50,32}
\newif\ifdraft
\begin{document}

\setcounter{proposition}{0} %
\setcounter{definition}{0} %
\setcounter{corollary}{0} %

\twocolumn[   

\icmltitle{Is In-Context Learning in Large Language Models Bayesian? \\ A Martingale Perspective}

\icmlsetsymbol{equal}{*}

\begin{icmlauthorlist}
\icmlauthor{Fabian Falck}{equal,yyy}
\icmlauthor{Ziyu Wang}{equal,yyy}
\icmlauthor{Chris Holmes}{yyy}
\end{icmlauthorlist}

\icmlaffiliation{yyy}{Department of Statistics, University of Oxford, Oxford, UK}

\icmlcorrespondingauthor{Fabian Falck}{fabian.falck@stats.ox.ac.uk}
\icmlcorrespondingauthor{Ziyu Wang}{ziyu.wang@stats.ox.ac.uk}
\icmlcorrespondingauthor{Chris Holmes}{cholmes@stats.ox.ac.uk}

\icmlkeywords{Machine Learning, ICML}

\vskip 0.3in
]

\printAffiliationsAndNotice{\icmlEqualContribution} %

\begin{abstract}
In-context learning (ICL) has emerged as a particularly remarkable characteristic of Large Language Models (LLM): given a pretrained LLM and an observed dataset, LLMs can make predictions for new data points from the same distribution without fine-tuning. Numerous works have postulated  ICL as approximately Bayesian inference, rendering this a natural hypothesis. In this work, we analyse this hypothesis from a new angle through the \textit{martingale property}, a fundamental requirement of a Bayesian learning system for exchangeable data. We show that the martingale property is a necessary condition for unambiguous predictions in such scenarios, and enables a principled, decomposed notion of uncertainty vital in trustworthy, safety-critical systems. We derive actionable checks with corresponding theory and test statistics which must hold if the martingale property is satisfied. We also examine if uncertainty in LLMs decreases as expected in Bayesian learning when more data is observed. In three experiments, we provide evidence for violations of the martingale property, and deviations from a Bayesian scaling behaviour of uncertainty, falsifying the hypothesis that ICL is Bayesian. 
\end{abstract}

\section{Introduction}
\label{sec:Introduction}

Large Language Models (LLMs) are autoregressive generative models trained on vast amounts of data, exhibiting extraordinary performance across a wide array of tasks \cite{zhao2023survey}.  %
A particularly remarkable characteristic of LLMs is so-called \textit{in-context learning} (ICL)  \cite{brown2020language,dong2022survey}: 
Given a pretrained language model $p_M$ and an observed dataset $D := \{(x_1,y_1), \ldots, (x_{n},y_{n})\} = z_{1:n}$ of samples, LLMs capture the distribution of the underlying random variables $X$ and $Y$ in this in-context dataset. %
This allows them produce a new sample $(x_{n+1}, y_{n+1})$ using the \textit{predictive distribution} $p_M(X_{n+1}, Y_{n+1} \vert Z_{1:n}=z_{1:n})$, or if $x_{n+1}$ is observed infer the predictive distribution $p_M(Y_{n+1} \vert X_{n+1}=x_{n+1},Z_{1:n}=z_{1:n})$, without retraining or fine-tuning $p_M$.

Few-shot learning via ICL \cite{brown2020language} has produced numerous breakthroughs in LLM research \cite{dong2022survey}, 
such as in supervised learning \cite{min2021metaicl} or  
chain-of-thought prompting \cite{wei2022chain}. 
In spite of the remarkable empirical success of ICL, we lack a unified understanding of the algorithm and the properties of conditioning LLMs on in-context data. %
In this work, we are interested in characterising the type of learning that occurs in ICL.  %
Specifically, we aim to answer the question: \textbf{is in-context learning for LLMs on exchangeable data (approximately) Bayesian?}  %

In contrast to prior work, our analysis focuses on one fundamental property of Bayesian learning systems for exchangeable data: the \textit{martingale property}. %
In a nutshell, the martingale property describes the invariance of a model's predictive distribution with respect to missing data from a population.
We will formally define and extensively explain the martingale property in \S \ref{sec:Background}, but begin by intuitively describing two important and desirable \textit{consequences} of it with an example,  highlighting its relevance. %
These consequences are:
(i) the martingale property is a necessary condition for rendering \textit{predictions unambiguous} in an \textit{exchangeable} data setting, and  %
(ii) it establishes a \textit{principled notion} of the model's \textit{uncertainty}.

\begin{figure*}[t]
    \centering
    \includegraphics[width=.9\linewidth,clip,trim={0 0.4cm 0 0.25cm}]{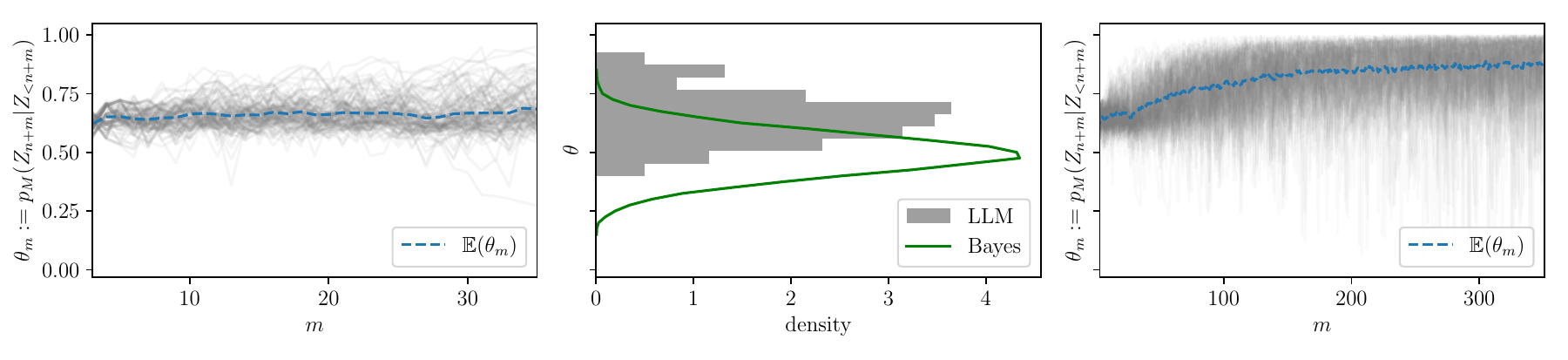}
\caption{
\textit{In-context learning in Large Language Models is not Bayesian.}
[Left] The \textit{martingale property}, a necessary condition of Bayesian learning systems, is satisfied for short sample paths.
[Centre] This allows us to %
approximate the \textit{martingale posterior} (see \S \ref{sec:The martingale property enables a principled notion of uncertainty}) which, however, indicates deviation from a reference Bayesian model.  %
[Right] For longer sample paths, we observe a drift which violates the martingale property, together rendering the ICL system non-Bayesian.  %
}
    \label{fig:figure1}
\end{figure*}

Consider a drug company exploring the efficacy of a new medication for headaches.
The company runs a two-arm Randomised Control Trial (RCT) with $100$ patients, 50 in each arm, comparing the new treatment with the current standard of care (in this case ibuprofen), and records the outcome $Y\in \{0,1\}$ whether patients are symptom-free four hours after treatment.
It is important to note that in this setting, the distribution of outcomes is independent of the order in which the patients are observed, a property known as \textit{exchangeability} (see \S \ref{sec:Background} for a formal definition). %
Half-way into the trial, the company conducts an interim analysis. 
Define the interim observations $D=\{(x_1, y_1), \ldots, (x_{50},y_{50})\}$ where $y_k$ indicates outcome, and $x_k$ the treatment arm and other patient covariates.
Given these observations, the company wants to decide whether to stop the trial early. %
The company uses an LLM, which was trained on potentially useful background information from the internet (e.g. on clinical trials, or the efficacy of ibuprofen), to generate the missing patients via ICL conditioning on $x_{1:n+k-1}$ for the $(n+k)$-th patient, and determines if the RCT is successful combining the observed and synthetic data.  %
It repeats this imputation procedure $J$ times, and decides to keep going with the trial if the fraction of symptom-free patients in the treatment over the control arm is above a certain threshold on average over these $J$ hypothetical trials.
\textit{Should we trust the LLM's prediction using ICL under this procedure?}   %

In preview of our experimental results in \S \ref{sec:Experiments}, the answer is \textit{`No'}.  %
Our experiments present evidence that state-of-the-art LLMs violate the martingale property in certain settings (see Fig. \ref{fig:figure1}).
The martingale property is a necessary condition for exchangeability, and in turn a fundamental property of Bayesian learning. %
If the martingale property is violated by an LLM performing ICL it implies that the model's predictions are not exchangeable, and hence that ICL with this LLM is not following any reasonable notion of probabilistic conditioning.
This renders the LLM's predictive distribution incoherent:   %
the model can make different predictions depending on the order in which the patients are imputed.
This is problematic because by the design of an RCT, we know that there is no outcome dependence on the order of observations.  %
It is incoherent and ambiguous to receive a different marginal predictions if we for example impute patient \# 51 or patient \# 100 first.
Note that independent and identically distributed (i.i.d.) is a stricter condition implying exchangeability, and hence our work also applies to any i.i.d.~data setting.  %
This should caution the practitioner of the use of LLMs in exchangeable applications and data settings.  %

But there is a second reason why the martingale property is crucial: 
it enables a principled interpretation of the \textit{uncertainty} of LLMs, allowing us to decompose inference into epistemic and aleatoric uncertainty (see \S \ref{sec:Background} for a detailed introduction).
Revisiting the RCT example above, 
if we acquire data from the $50$ remaining patients, a costly decision, can this substantially decrease (epistemic) uncertainty? 
What is the effect of acquiring additional features for each patient, e.g. a genetic predisposition, on the (aleatoric) uncertainty? --  %
Without satisfying the martingale property, we have no understanding of the effect on reducing uncertainty in applications where additional data acquisition is feasible, for instance active learning or reinforcement learning.   %
We cannot study the question `why is the point prediction of my LLM imprecise' in a principled way,  %
and the uncertainty of an LLM's predictive distribution remains opaque. 
This finding has important implications for safety-critical, high-stakes applications of LLMs where trustworthy systems with a principled uncertainty estimate are vital.

This work states the hypothesis that ICL in LLMs given exchangeable data is Bayesian.
Numerous works have argued that ICL approximates some form of Bayesian inference \cite{xie2021explanation,hahn2023theory,akyurek2022learning,zhang2023and,jiang2023latent} which we will carefully review in App.~\ref{sec:Related work}, rendering this hypothesis natural.  %
Our work introduces a novel perspective  which contradicts their conclusion: %
we show that the martingale property, a fundamental property of Bayesian learning systems,
is violated for state-of-the-art LLMs such as 
Llama2, Mistral, GPT-3.5 and GPT-4.   %
We on purpose focus our analysis on three synthetic experiments where the ground-truth data generating process is simple and known, and which provide a useful test bed without the convolution of unknown latent effects as is typical in natural language. 
Our goal is to provide a scientific and precise framework which measures and quantifies the degree to which ICL of an LLM is Bayesian. 

More specifically, our \textit{contributions} are: %
(a) 
We motivate the martingale property as a fundamental property of Bayesian learning, crucial for unambiguous predictions of an LLM in exchangeable settings, and a principled interpretation of uncertainty in LLMs (\S \ref{sec:Background}).  %
(b) 
We  derive actionable diagnostics with corresponding theory and test statistics of the martingale property for ICL.  %
We also characterise the efficiency of ICL compared to standard Bayesian inference (\S \ref{sec:sec3}).  %
(c)
We provide novel evidence for violations of the martingale property through LLMs in certain settings, and a deviation of the sample efficiency of ICL relative to Bayesian systems, falsifying our hypothesis that ICL in LLMs is Bayesian and cautioning against the use of LLMs in exchangeable and safety-critical applications (\S \ref{sec:Experiments}).  %

\section{What Characterises a Bayesian Learning System? A Martingale Perspective}
\label{sec:Background}

In this section we rigorously formalise properties of an ICL system that follows Bayesian principles.  %
Theoretical details and technical proofs are presented in App.~\ref{app:proof}.

\subsection{The Martingale Property}
We begin by defining the \textit{martingale property}. %

\begin{definition}%
The predictive distributions for $\{Z_i\}$ satisfy the \textit{martingale property} if for all integers $n,k>0$ and realisations $\{z, z_{1:n}\}$ we have 
\begin{align} 
&p_M(Z_{n+1}\!\!=\!\!z \vert Z_{1:n}\!\!=\!\!z_{1:n}) \!=\! p_M(Z_{n+k}\!\!=\!\!z \vert Z_{1:n}\!\!=\!\!z_{1:n}).   \label{eq:martingale}%
\end{align}
\end{definition}
Eq.~\eqref{eq:martingale}
states that $\{Z_i \} \sim p_M$ are \textit{conditionally identically distributed}  (\citealp{Berti2004}). 
As we will explain in \S\ref{sec:The martingale property enables a principled notion of uncertainty}, this renders  %
distributions $\{p_M(Z_{n+1}=\cdot\vert Z_{1:n})\}$ to form a martingale, hence the name `martingale property'. %

It follows from Eq.~\eqref{eq:martingale} that predictive distributions of the form $p_M(Y_{n+k}|X_{n+k}, Z_{1:n})$ satisfy a similar identity:  %
\begin{align} 
    &\phantom{{}={}}p_M(Y_{n+1}=y\vert X_{n+1}=x, Z_{1:n}=z_{1:n}) \nonumber\\ 
    &= p_M(Y_{n+k}=y | X_{n+k}=x, Z_{1:n}=z_{1:n}) \nonumber \\ %
    &= \Eb_{Z_{n+1:n+k-1}\sim p_M(\cdot|Z_{1:n}=z_{1:n})}  \nonumber\\
    &\hspace{3em} p_M(Y_{n+k}=y \vert X_{n+k}=x, Z_{1:n+k-1}),  \label{eq:martingale-y}
\end{align}
for all integers $n,k>0$, realisations $\{z_{1:n}, y\}$, and (almost every) realisation $x$ measured by $p_M(X_{n+1}|Z_{1:n}=z_{1:n})$.
In Eq.~\eqref{eq:martingale-y} the martingale property renders a model's predictions invariant to imputations of missing samples from the population (on average).
Note that %
Eqs. \eqref{eq:martingale} and \eqref{eq:martingale-y} are equivalent in the unconditional case ($x_i=\varnothing$), which we consider in the majority of our experiments in \S \ref{sec:Experiments}. %

\begin{figure}[t]
    \centering
    \includegraphics[width=.7\linewidth,trim={0 0.1cm 0 0.1cm}]{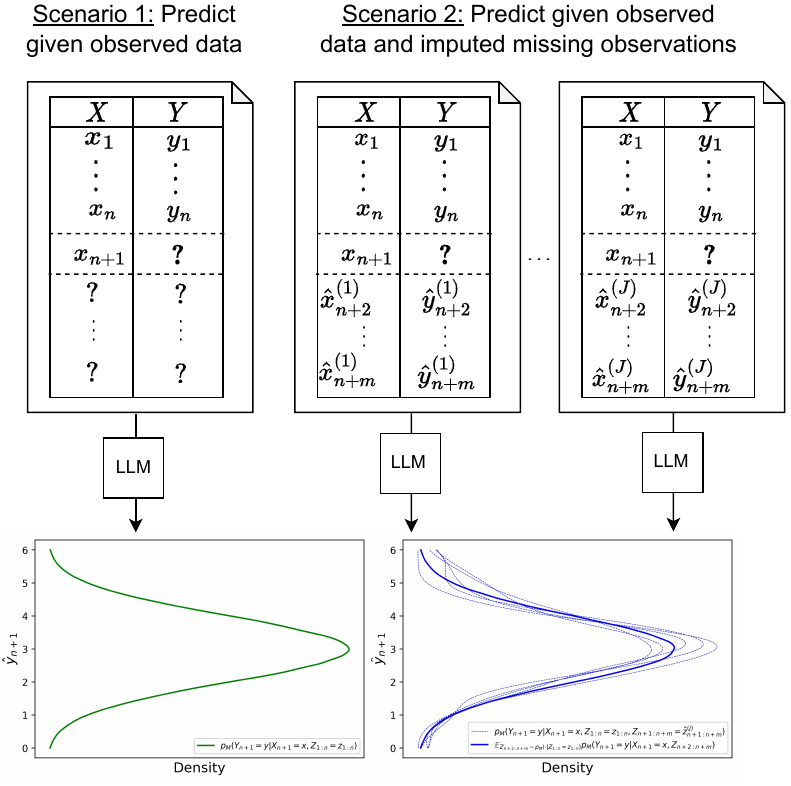}  
    \caption{
    The \textit{martingale property}, a fundamental requirement of a Bayesian learning system, requires \textit{invariance with respect to missing samples from a population.} 
    }
    \label{fig:martingale}
\end{figure}

\subsection{The Martingale Property is %
Necessary %
for Unambiguous Predictions under Exchangeable Data}
\label{sec:martingale}

To understand the %
intuition behind the seemingly technical notion of the martingale property, consider two scenarios for ICL, illustrated in Fig.~\ref{fig:martingale}.
In both scenarios, the LLM is given the observed data $(D, x_{n+1})$. %
In scenario 1, %
the LLM directly infers the \textit{predictive distribution} $p_M(Y_{n+1} \vert Z_{1:n}=z_{1:n}, X_{n+1} = x_{n+1})$. 
In scenario 2, before making a prediction, the LLM generates (imputes) $m-1$ missing samples $\hat{z}_{n+2:n+m}$ from the population autoregressively. %
Given the observed data and the imputed samples as a prompt, we then sample from %
the LLM's predictive distribution 
$p_M(Y_{n+1} \vert Z_{1:n}=z_{1:n}, X_{n+1} = x_{n+1}, Z_{n+2:n+m} = \hat{z}_{n+2:n+m})$. 
We repeat this imputation procedure $J$ times and average the obtained predictive distributions to receive a Monte Carlo estimate of the right-hand side of Eq.~\eqref{eq:martingale-y}. %
Scenario 2 is of practical interest when estimating aggregated statistics of a population as illustrated in our RCT example in \S \ref{sec:Introduction}. -- %
The martingale property then states that the predictive distribution from scenario 1, $p_M(Y_{n+1} \vert Z_{n}=z_n, X_{n+1} = x_{n+1})$, and the predictive distribution from scenario 2, $p_M(Y_{n+1} \vert Z_{n}=z_n, x_{n+1}, Z_{n+2:n+m}=\hat{z}_{n+2:n+m})$, when averaged over all possible imputations of $\hat{z}_{n+2:n+m}$ are equivalent.

Why is the martingale property natural for any probabilistic system, and LLMs in particular?   %
It is important to observe that all information about the distribution of $X$ and $Y$ presented to the model (in addition to its prior belief \cite{zellner1988optimal}) %
lies in the observed data $(D, x_{n+1})$. 
Imputing the samples $\hat{z}_{n+2:n+m}$ should hence not change the predictive distribution for $y_{n+1}$ when averaged over all possible imputations. 
This is precisely the core idea of the martingale property.
If the predictive distribution for $y_{n+1}$ changes on average, the %
model is `creating new knowledge' when there is none: %
it is `hallucinating'.   %
In preview of our experimental results in \S \ref{sec:Experiments}, we observe this violation of the martingale property in state-of-the-art LLM families. %
We call this phenomenon \textit{introspective hallucinations}: by querying itself, the model changes its predictions (on average), which as we shall see in \S\ref{sec:bayesian_link} violates how Bayesian systems learn.  %

There is another way in which predictions are rendered unambiguous: 
under \textit{exchangeability} for which the martingale property is a necessary condition (see App. \ref{app:proof}) 
the model is invariant to the order of the observed and missing data. %
This requirement is vital if we know that the order of the underlying distributions is irrelevant, for instance because---as in the RCT example in \S \ref{sec:Introduction}---we have designed the experiment such that we can exclude a dependency on the order.
Formally, this concept is known as exchangeability. 
A sequence of random variables $\{Z_i\}\sim p_M$  is \textit{exchangeable} if for all $\ell\in\mb{N}$ and $\ell$-permutations $\sigma$, %
\begin{align}
&\phantom{{}={}}p_M(Z_{1},\ldots,Z_{\ell})    %
= p_M(Z_{\sigma(1)},\ldots, Z_{\sigma(\ell)}). \label{eq:exc}  %
\end{align}

Exchangeability guarantees the invariance of predictions to the ordering of the observations $Z_{1:n}$, but also with respect to the order of future imputations $Z_{n+1,\ldots}\vert Z_{1:n}$. 
In the standard ICL setup, it is natural to assume that the sequence of example tuples in the ICL dataset, which is part of the prompt, is i.i.d.~and thus exchangeable, and many influential works make this assumption (often without stating it explicitly) \cite{xie2021explanation,wang2023large,jiang2023latent}.
To understand the importance of this assumption further, consider   %
the RCT example in \S \ref{sec:Introduction}, where %
$\{Z_{1:100}\}$ are (by experimental design) exchangeable. %
A model $p_M$ should hence satisfy   %
\begin{align*}
\small
&p_M(Y_{n+k} \vert X_{n+k}\!=\!x, Z_{1:n},%
X_{n+1:n+k-1}\!=\!\hat x_{n+1:n+k-1}) = \\   %
 & p_M(Y_{n+k} \vert X_{n+k}\!=\!x, Z_{1:n},%
 X_{n+1:n+k-1}\!=\!\hat x_{\sigma(n+1:n+k-1)}),  %
\end{align*}
meaning that the %
prediction for $Y_{n+k}\vert D, X_{n+k}$ is independent of the order of the imputed inputs $\hat x_{n+1:n+k-1}$.  
If a model $p_M$ violates the above equality, 
there may be ambiguities in the prediction of the next sample $(Y_{n+k}, X_{n+k})$ as it may depend on and vary with the ordering. %
Such ambiguities would 
substantially undermine the credibility of %
predictions, as well as the downstream decision-making based on such procedures. %
The martingale property is connected to the above notions of invariance as a necessary condition for exchangeability. %
Furthermore, it can even ensure exchangeability of imputed samples as the observed sample size $n$ becomes large, %
because Eq.~\eqref{eq:martingale} implies asymptotic exchangeability of $Z_{n+1,\ldots}\vert Z_{1:n}$ %
\citep[Thm.~2.5]{Berti2004}. %

\subsection{The Martingale Property Enables a Principled Notion of Uncertainty}
\label{sec:The martingale property enables a principled notion of uncertainty}

The second desirable and important consequence of the martingale property is that it establishes a principled notion of uncertainty in the model's predicitive distribution.
More specifically, it allows us to decompose this uncertainty, enabling us to study and interpret the uncertainty of a model.

To simplify the exposition, suppose the variables $Z_i$ are discrete and have $A<\infty$ realisations (both standard in LLMs) \footnote{We refer to App.~\ref{app:mp} for a review of the more general case.}, %
 so that 
any distribution  $p_\theta(Z=\cdot)$ can be identified by 
a vector $\theta\in \R^A$. %
Let $\theta_n$ denote the random vector that indexes $p_M(Z_{n+1}\mid Z_{1:n})$. %
Then, the martingale property is equivalent to %
stating that $\{\theta_n\}$ form a martingale  
w.r.t.~the filtration defined by $\{Z_n\}$. %
Under boundedness conditions always satisfied in the above %
case, Doob's theorem \cite{doob1949application} states that $\theta_n$ converges almost surely to a random vector $\theta_\infty$, and we have 
$
\theta_n = \Eb_{\theta_\infty\mid Z_{1:n}} \theta_\infty,  %
$
or equivalently, 
\begin{equation}
\label{eq:uncertainty}
p_M(Z_{n+1}\!=\!\cdot \!\mid\! Z_{1:n}) \!=\!\! \int\! {\color{myblue}p(\theta_\infty\vert Z_{1:n})} {\color{myred}p_{\theta_\infty}(Z\!=\!\cdot)}  d\theta_\infty.
\end{equation}
Note the similarity of Eq.~\eqref{eq:uncertainty} with %
Bayesian inference: 
the Bayesian posterior %
predictive distribution has the form 
\begin{equation}\label{eq:bayes-uncertainty}
p_M(Z_{n+1}=\cdot \mid Z_{1:n}) = \int {\color{myblue}p(\theta\vert Z_{1:n})} {\color{myred}p(Z=\cdot\vert \theta)}  d\theta.
\end{equation}
The random vector $\theta_\infty$ plays the same role as the %
parameter $\theta$ in a Bayesian model, as    %
both %
determine a predictive distribution ($p_{\theta_\infty}(Z)$ or $p(Z\vert\theta)$). %
They are thus interchangeable for prediction purposes.
Moreover, 
if %
$p_M$ is defined through Bayesian inference over %
$\theta$, %
$p_{\theta_\infty}$ will define the same distribution over $Z$ as  $p(\cdot\vert\theta)$
(see App.~\ref{app:mp}). %
Therefore we refer to the %
distribution $\theta_\infty\vert Z_{1:n}$ as the \textit{martingale posterior}.

Eq.~\eqref{eq:uncertainty} shows that 
the variation or \textit{uncertainty} in the predictive distribution 
$p_M(Z_{n+1}=\cdot \mid Z_{1:n})$ has two sources: 
\begin{enumerate}[topsep=0pt, itemsep=0pt, partopsep=0pt, parsep=0pt,leftmargin=*]
    \item \textbf{\color{myblue}epistemic uncertainty}, which is %
    about the latent %
    $\theta_\infty$ and %
    can be reduced if more data is available; and  %
    \item \textbf{\color{myred}aleatoric uncertainty}, which is irreducible given a fixed set of features even if infinite samples are observed and all aspects of the data generating process, namely the latent $\theta_\infty$, are known. 
\end{enumerate} 

The close connection between %
Eqs.~\eqref{eq:uncertainty} and \eqref{eq:bayes-uncertainty} %
shows that this decomposition of uncertainty is established by the same foundations as in Bayesian inference. 
This is particularly relevant for LLMs which lack clearly stated, interpretable and verifiable assumptions (such as a prespecified statistical model), %
rendering %
their predictive distribution %
a `black-box'. 

Importantly, we can %
construct the martingale posterior solely using path samples from $p_M$:
we can sample from $p(\theta_{n+k}|Z_{1:n})$ simply by sampling $Z_{n+1:n+k-1}|Z_{1:n}$ as $\lim_{k\to\infty}\theta_{n+k}=\theta_\infty$. 
Alternatively, we can also estimate parametric models on the path samples as proposed in \citet{fong2021martingale} (see App.~\ref{app:mp} for further details). %
This construction is an appealing tool for interpreting black-box models such as LLMs.   %

The interpretable decomposition of uncertainty further provides actionable guidance on how the combined uncertainty can be reduced: 
We can collect more samples to reduce epistemic uncertainty in scenarios where this is possible such as active learning, reinforcement learning or healthcare; 
particularly in regions of the input space where the uncertainty is high.
In \S \ref{sec:additional-checks} we propose diagnostics to check if epistemic uncertainty decreases w.r.t.~training sample size.
On the contrary, if the aleatoric uncertainty is high and ought to be reduced, we cannot do so without `changing the problem', for instance by collecting more features for each data point.
This principled notion of uncertainty in a model is crucial in safety-critical, high-stakes scenarios for building trustworthy systems.%

We present the following example for further intuition: 
\begin{example}\label{ex:bernoulli}
Suppose $Z_i\in\{0, 1\}$. Then $\theta_\infty = (\theta_{\infty,0},\theta_{\infty,1})\in\R^2$, and $p_{\theta_\infty} = \mathrm{Bern}(\theta_{\infty,1})$. 
Thus, in both Eq.~\eqref{eq:uncertainty} and Eq.~\eqref{eq:bayes-uncertainty} the epistemic uncertainty is represented by a distribution over the Bernoulli parameter, revealing their inherent connection. 
The epistemic uncertainty is especially important in scenarios where we use a black-box model %
$p_M$ to impute the missing samples $\{Z_{n+i}\}$ from a population 
---as in the RCT example in \S\ref{sec:Introduction}--- 
and want to quantify a model's lack of knowledge about the population. 
Note this distribution is not identifiable if we only have samples from a single-step predictive distribution $p_M(Z_{n+1}\vert Z_{1:n})$, but becomes identifiable given \textit{sample paths}. 
\end{example}

\subsection{On the Link between the Martingale Property and Bayesian Learning Systems}
\label{sec:bayesian_link}

So far, we asserted that the martingale property is fundamental to a Bayesian ICL system.  %
In this subsection, we want to further formalise this.
We have already discussed the close connection between the martingale property, exchangeability (\S \ref{sec:martingale}), and uncertainty (\S \ref{sec:The martingale property enables a principled notion of uncertainty}).  %
We will now show that for ICL on i.i.d.~data, exchangeability, for which the martingale property is a necessary condition, and Bayesian inference are closely connected, equivalent conditions.  %

ICL typically assumes i.i.d.~observations $Z_{1:n}$, which is our primary focus in this work (see \S \ref{sec:martingale}). %
Therefore, a correctly specified Bayesian model should produce %
marginal predictive distributions %
of the form 
\begin{align}
&\phantom{{}={}}p_M(Z_{1:n}\!\!=\!\!z_{1:n})  \label{eq:invar_impl}\\
&\!=\!
\!\!\int\! p_M(Z_1\!\!=\!\!z_1,\ldots, Z_n\!\!=\!\!z_n\vert\theta) \pi(\theta)d\theta \nonumber\\ 
&\!=\! 
\!\!\int\! \biggl(\prod_{i=1}^n p_M(Z\!\!=\!\!z_i\vert\theta)\!\biggr)\pi(\theta)d\theta, ~~~\forall n\in\mathbb{N}. \label{eq:exc0}
\end{align}
Here, $\theta$ denotes the parameter of a Bayesian model, $\pi$ denotes the prior measure %
and $p_M(Z=\cdot\mid \theta)$ denotes the likelihood.   %
From the factorisation over the data dimension $n$ in \eqref{eq:exc0}, we can see that it is invariant with respect to permutations of $z_{1:n}$, and thus the left-hand side of the equation in \eqref{eq:invar_impl} is invariant, too.
It then follows that $\{Z_i\}\ \sim p_M$ satisfies Eq.~\eqref{eq:exc}, and thus $\{Z_i\}$ are exchangeable.   %
The converse is also true by %
de Finetti's representation theorem \cite{de1929funzione}: %
Under mild regularity conditions 
any $p_M$ that defines exchangeable  $\{Z_i\}$ must have a representation %
in the form of Eq. \eqref{eq:exc0}. 
It then follows that 
the predictive distribution $p_M(Z_{n+1}|Z_{1:n})$ %
has the form of a Bayesian posterior predictive distribution, 
\begin{align*}
p_M(Z_{n+1}|Z_{1:n}) = \int  p_M(Z_{n+1}\vert \theta) \pi(\theta | Z_{1:n})d\theta,
\end{align*}
and can thus be viewed as implicit Bayesian inference for the %
latent variable $\theta$ \cite{bayes_blog}.  %
In conclusion, ICL on i.i.d.~data corresponds to a Bayesian model that assumes (conditionally) i.i.d.~observations \textit{if and only if} it defines an exchangeable sample sequence.  %
Since the martingale property is a necessary condition for exchangeability, an ICL system not satisfying the martingale property cannot be Bayesian.  %

\section{Probing Bayesian Learning Systems through Martingales}
\label{sec:sec3}

In this section we introduce practical diagnostics to probe if LLMs match the behaviour of Bayesian learning systems. %

\subsection{Are All Deviations from Bayes Bad? -- Expected and Acceptable Deviations from Bayesian Reasoning}  %
\label{sec:acceptable-deviations}

Numerous properties are implied if a learning system satisfies the martingale property, a distributional characteristic, and it is both infeasible and unnecessary %
as often practically irrelevant to check all of them in order to provide evidence for or against our hypothesis.  %
For example, the martingale property implies that all conditional moments should be equivalent, i.e. $\Eb(Z_{n'+1}^l | Z_{1:n}) = \Eb(Z_{n'+k}^l | Z_{1:n})$ for all integers $n,n',k,l>0$ and $n'>n$, yet higher-order moments are not vital in most applications and hence are acceptable deviations, if existent. 
Therefore, we will restrict our attention to two key implications of the martingale property which---if present---have important practical consequences.

Pretrained LLMs are general-purpose models and can at best approximate Bayesian learning via ICL. 
The martingale property is an invariance that is not hard-coded in their transformer-based architecture, and can only be approximately (rather than exactly) satisfied.
Let us assume that an LLM internally maintains a `hierarchy of states' \cite{wang2023large}, say a hierarchical Bayesian model, capturing different tasks (e.g. Bayesian ICL from i.i.d.~data, or acting in a dialogue system), and at each sampling step first updates its belief about this state. 
Say there is a probability $p$ that the LLM deviates from Bayesian ICL or simply fails to approximate.  %
Even if $p$ is small, the probability of a deviation $1\!-\!(1\!-\!p)^m$ becomes substantial when accumulated over a long sampling path of length $m$.    %
In early experiments, we observed frequent poor approximations for long sampling paths (see Fig. \ref{fig:bern-hf-long} in the Appendix).
This would trivially falsify the martingale property and our hypothesis. 

In our experiments in \S \ref{sec:Experiments}, we hence restrict the sampling paths to a %
short, finite length where we check the martingale property.
We also design our checks to be robust against such behaviour, for example by removing outliers before computing a test statistic. %
Furthermore, we are particularly interested in stark and unequivocal evidence of the model violating the martingale property beyond an expected error of any approximating model.   %
We will analyse and quantify violations of the martingale property with diagnostics, which we introduce in \S \ref{sec:CID-check}, in order to check our hypothesis experimentally.
In App.~\ref{app:acceptable-dev} we derive the order of `acceptable violations' %
for the test statistics we will introduce. %

\subsection{Diagnostics for the Martingale Property}\label{sec:CID-check}

As we showed in \S \ref{sec:bayesian_link}, the martingale property is fundamental to a Bayesian learning system. 
In this work, we probe the martingale property in LLMs via two properties \textit{implied by} it.
If these implied properties are strongly violated, so is the martingale property.
More specifically, %
we will derive implications involving %
conditional expectations of the form $\Eb(f(Z_{n+1:n+m})\vert Z_{1:n})$, %
which can be estimated 
by generating sample paths $\{z_{n+1:n+m}^{(j)}\sim p_M(Z_{n+1:n+m}\vert Z_{1:n}=z_{1:n})\}_{j=1}^J$ autoregressively with an LLM, %
and %
use these samples %
to form Monte Carlo estimates of the conditional expectations. 
We begin with an equivalent characterisation of the (conditional) martingale property. %

\begin{restatable}{proposition}{prop} \label{lemma:side_2}
A sequence %
$\{Z_{n+1:n+m}\} \sim p_M(\cdot\vert Z_{1:n})$ 
satisfies the martingale property %
if and only if the following holds: for all $n',k\in\mb{N}$ and integrable functions $g,h$:
\begin{align}\label{eq:CID-equiv-form}
\Eb((g(Z_{n'+k})-g(Z_{n'+1})) h(Z_{n+1:n'}) | Z_{1:n}) = 0.
\end{align}
\end{restatable}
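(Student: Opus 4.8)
The claim is an "if and only if" characterisation of the (conditional) martingale property, i.e. the statement that $Z_{n'+1}$ and $Z_{n'+k}$ are conditionally identically distributed given $Z_{1:n'}$ for every $n'\ge n$, in the sense of \citet{Berti2004}. My plan is to prove both directions with the tower property of conditional expectation, using throughout that for $n'\ge n$ the factor $h(Z_{n+1:n'})$ is measurable with respect to $\sigma(Z_{1:n'})$ and that $\sigma(Z_{1:n})\subseteq\sigma(Z_{1:n'})$ (the indices are understood to satisfy $n'\ge n$, so that $Z_{n+1:n'}$ is part of the history).

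For the forward direction I would assume the martingale property and apply Eq.~\eqref{eq:martingale} at conditioning length $n'$, which is legitimate since it is assumed for every length: conditionally on $Z_{1:n'}$, the variables $Z_{n'+1}$ and $Z_{n'+k}$ then share the same law, whence $\Eb(g(Z_{n'+1})\mid Z_{1:n'})=\Eb(g(Z_{n'+k})\mid Z_{1:n'})$ for every integrable $g$. I would then condition the left-hand side of Eq.~\eqref{eq:CID-equiv-form} on $Z_{1:n'}$ via the tower property, pull the $\sigma(Z_{1:n'})$-measurable factor $h(Z_{n+1:n'})$ outside the inner conditional expectation, and observe that the remaining inner term $\Eb\big((g(Z_{n'+k})-g(Z_{n'+1}))\mid Z_{1:n'}\big)$ is identically zero.

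For the reverse direction, I would fix $n',k$ and an integrable $g$, and set $\phi:=\Eb\big(g(Z_{n'+k})-g(Z_{n'+1})\mid Z_{1:n'}\big)$. Applying the tower property to Eq.~\eqref{eq:CID-equiv-form} rewrites the hypothesis as $\Eb\big(h(Z_{n+1:n'})\,\phi\mid Z_{1:n}\big)=0$ for every integrable $h$. Conditionally on $Z_{1:n}$ the random variable $\phi$ is itself a function of $Z_{n+1:n'}$, so letting $h$ range over a sufficiently rich family (indicators, or a countable $L^2$-dense family) forces $\phi=0$ almost surely. Letting $g$ range over indicators $\1\{\cdot=z\}$ in the discrete setting of the paper (or over a separating family of bounded functions in general) then yields $p_M(Z_{n'+1}=z\mid Z_{1:n'})=p_M(Z_{n'+k}=z\mid Z_{1:n'})$; since $n',k$ were arbitrary this is exactly Eq.~\eqref{eq:martingale} for the sequence conditioned on $Z_{1:n}$.

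The main obstacle is the measure-theoretic step in the reverse direction: the test function $h$ must be a fixed function of $Z_{n+1:n'}$ and cannot depend on $Z_{1:n}$, whereas $\phi$ is a function of the full history $Z_{1:n'}$. Concluding $\phi=0$ therefore requires a genuine conditional orthogonality argument---$\phi$ lies in the closed span of such $h$'s and yet must be orthogonal to all of them under the conditional-on-$Z_{1:n}$ inner product---which is cleanest to carry out by fixing a countable dense family $\{h_i\}$, intersecting the corresponding null sets, and only then passing to equality of the conditional laws. Everything else reduces to routine applications of the tower property together with the assumed integrability.
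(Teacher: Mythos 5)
Your proposal is correct and takes essentially the same route as the paper: the forward direction conditions on $Z_{1:n'}$ via the tower property and pulls out the $\sigma(Z_{1:n'})$-measurable factor $h(Z_{n+1:n'})$, and the reverse direction tests against indicator functions $h=\mathds{1}_A$ for $\sigma(Z_{n+1:n'})$-measurable sets $A$ to force the inner conditional expectation to vanish. The only difference is cosmetic: where you construct a countable dense family $\{h_i\}$ and intersect null sets to handle the measurability subtlety you flag, the paper concludes directly from the defining property of conditional expectation that $\Eb(g(Z_{n'+k})-g(Z_{n'+1})\mid Z_{1:n'})=0$ a.s., the point being that the hypothesis holds \emph{conditionally on} $Z_{1:n}$, so multiplying by indicators of $\sigma(Z_{1:n})$-sets recovers the full generating $\pi$-system of $\sigma(Z_{1:n'})$ without any dense-family bookkeeping.
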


We now state two implications of \cref{lemma:side_2}, our two diagnostics of the martingale property, which we will check experimentally in \S \ref{sec:Experiments}.  %
\begin{restatable}{corollary}{martingaleimplications}
\label{prop:martingale_implications}
Let $\{Z_i:i\in\mb{N}\}$ be a sequence of random variables satisfying the martingale property. 
Then for all integers $n,n',k>0$ and $n'>n$ it holds that: %
\begin{enumerate}[itemsep=0pt, parsep=0pt, topsep=0pt]
    \item[(i)] $\Eb(g(Z_{n+1})\vert Z_{1:n}) = \Eb(g(Z_{n+k})\vert Z_{1:n})$ for all integrable functions $g$, and
    \item[(ii)] $\Eb((Z_{n'+k+1} - Z_{n'+1}) Z_{n'}^\top\vert Z_{1:n}) = 0$. 
\end{enumerate}
\end{restatable}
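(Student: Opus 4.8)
The plan is to obtain both parts as special cases of the equivalent characterisation in \cref{lemma:side_2}, which already packages the full strength of the conditional martingale property into the family of identities $\Eb((g(Z_{n'+k})-g(Z_{n'+1}))\,h(Z_{n+1:n'})\mid Z_{1:n})=0$. All that remains is to choose the test functions $g,h$ and the free index so that this specialises to each claim. Throughout I would invoke only integrability of the chosen $g$ (for (i)) and a finite-second-moment assumption (for (ii), where the integrand becomes quadratic in $Z$); both hold automatically in the discrete, finite-alphabet setting of \S\ref{sec:The martingale property enables a principled notion of uncertainty}.

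For part (i) I would evaluate \cref{lemma:side_2} at the boundary index $n'=n$. Then $h$ is a function of the empty tuple $Z_{n+1:n}$, hence a constant, and taking $h\equiv 1$ reduces the identity to $\Eb(g(Z_{n+k})-g(Z_{n+1})\mid Z_{1:n})=0$, which is exactly $\Eb(g(Z_{n+1})\mid Z_{1:n})=\Eb(g(Z_{n+k})\mid Z_{1:n})$. This part is essentially a restatement of the Definition: since $p_M(Z_{n+1}=\cdot\mid Z_{1:n})$ and $p_M(Z_{n+k}=\cdot\mid Z_{1:n})$ are the same distribution, integrating any integrable $g$ against them yields the same value.

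Part (ii) is where the content lies, and the key is that \cref{lemma:side_2} lets $h$ depend on the intermediate variables $Z_{n+1:n'}$---in particular on $Z_{n'}$ itself---which is precisely what distinguishes conditional identical distribution from equality of one-step predictives, and what forces the cross term against $Z_{n'}^\top$ to vanish. I would argue entrywise: fix coordinates $a,b$, take $g(z)=z_a$ and $h(Z_{n+1:n'})=Z_{n',b}$, the latter being a legitimate function of $Z_{n+1:n'}$ because $n'>n$ places the index $n'$ inside $\{n+1,\dots,n'\}$. Applying \cref{lemma:side_2} with these choices, with the same $n'$, and with the proposition's lag index set to $k+1$ (so that $Z_{n'+k+1}$ is matched against the statement's $Z_{n'+k}$), gives $\Eb((Z_{n'+k+1,a}-Z_{n'+1,a})\,Z_{n',b}\mid Z_{1:n})=0$. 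Stacking these scalar identities over all pairs $(a,b)$ reassembles the matrix equation $\Eb((Z_{n'+k+1}-Z_{n'+1})Z_{n'}^\top\mid Z_{1:n})=0$.

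Once \cref{lemma:side_2} is available the argument is short, so I do not anticipate a genuine obstacle; the two places to be careful are the index bookkeeping (the shift to lag $k+1$, and the fact that $Z_{n'}$ is a future variable absorbed into $h$ rather than a conditioning variable) and the vector-valued nature of $Z$, handled by reducing (ii) to one scalar identity per coordinate pair and then stacking. The only modelling caveat worth flagging explicitly is the finite-second-moment requirement that makes the quadratic integrand in (ii) well defined.
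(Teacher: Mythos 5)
Your proposal is correct and follows essentially the same route as the paper: both parts are obtained from Proposition~\ref{lemma:side_2} by the choices $h\equiv 1$ for (i) and $g(z)=z$, $h(z_{n+1:n'})=z_{n'}$ for (ii), exactly as in the paper's one-line proof. Your additional care---taking $n'=n$ in (i), arguing (ii) coordinatewise with the lag shifted to $k+1$, and flagging the second-moment requirement---merely makes explicit bookkeeping the paper leaves implicit.
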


Properties (i) and (ii) %
are derived from Proposition~\ref{lemma:side_2} by making different choices of the functions $(g, h)$. 
Property (i) follows by setting $h(Z_{n+1:n'})\equiv 1$ %
and examines the marginal predictive distributions %
$p_M(Z_{n+k}\vert Z_{1:n})$. 
We instantiate (i) using (at most) two choices of $g$: %
In preview of \S \ref{sec:Experiments}, we will perform our checks on unconditional experiments where $Z_i$---or equivalently $Y_i$ because of the unconditional setting---are Bernoulli or Gaussian distributed random variables. %
In the Bernoulli experiment it suffices to choose the identity function $g(z)=z$, 
as the mean $\Eb(Z_{n+k}\vert Z_{1:n})$ provides full information about the distribution $p_M(Z_{n+k}\vert Z_{1:n})$. 
In the Gaussian experiment, we will observe that choosing $g(z)=z$ and $g(z)=z^2$ is in most cases sufficient to reveal substantial violations from the martingale property. 

Property (ii) is equivalent to requiring Eq. \eqref{eq:CID-equiv-form} to hold for all linear functions $(g,h)$, which follows by linearity of the functions and the conditional expectation.
We will again see in our experiments that this choice is usually sufficient to reveal deviations from the martingale property. 
Let us further consider our choices for $h$ and $g$ with an example. %

\begin{example}\label{ex:corr-1-and-mean-param}
Suppose $p_M$ is a Bayesian learning system over a latent parameter $\theta$ (see Eq.~\eqref{eq:exc0}), and the respective likelihood $p(Z\vert\theta)$ satisfies $\Eb_{Z\sim p(Z\vert\theta)}Z=\theta$. 
Then by \cref{prop:martingale_implications}, for all $(k,n')$ we have 
\begin{itemize}[itemsep=1pt, parsep=1pt, topsep=1pt,leftmargin=*]
    \item 
$\Eb(Z_{n+k}\vert Z_{1:n}) = \Eb(\theta\vert Z_{1:n})$, and
\item $\Eb(Z_{n'+k+1}Z_{n'+1}^\top\vert Z_{1:n}) = \Eb(\theta\theta^\top\vert Z_{1:n})$ %
\citep[see e.g.][p.~454]{ghosal2017fundamentals}.
\end{itemize}

In this setting, condition (i) (with $g(z)=z$) and (ii) 
thus guarantee that the conditional mean and covariance equal the posterior mean and covariance, respectively, independent of the indices $(n',k)$. 
These two important aspects of the posterior are hence consistently expressed by the model.
The example is especially relevant %
as it covers Bernoulli ($p(Z \vert\theta)=\mathrm{Bern}(\theta)$) and Gaussian data, which will be our main focus in the experiments. 
\end{example}

In App. \ref{app:Additional experimental details and results} we present aggregated statistics $T_{1,g}$ and $T_{2,k}$ to compute and empirically measure properties (i) and (ii) from sample paths generated by an LLM.
In our experiments, we check if these statistics lie within bootstrapped confidence intervals obtained by a reference Bayesian predictive model, which is readily available in synthetic settings, through the same sampling procedure. 
We will refer to these comparisons as `checks' of the martingale property.
If  $T_{1,g}$ and $T_{2,k}$ lie outside the confidence interval, properties (i) and (ii) and hence the martingale property are violated.

\subsection{Diagnostics for Epistemic Uncertainty}\label{sec:additional-checks}

As discussed in \S \ref{sec:The martingale property enables a principled notion of uncertainty}, the martingale property allows us to identify epistemic uncertainty, which should decrease with more observed samples.
Here, we derive a third diagnostic for Bayesian ICL systems which probes this.  %
We begin by presenting a theoretical fact which provides important intuition on the role of epistemic uncertainty. %
\begin{restatable}{fact}{posteriorvarianceesterror}
\label{fact:posterior-variance-est-error}%
Let $\pi(\theta)$ and $p_M(Z\vert\theta)$ be the prior and likelihood of a Bayesian model,
$\bar\theta_n := \Eb_{\theta\sim \pi(\theta\mid z_{1:n})} \theta$ the posterior mean given data %
$z_{1:n}$, 
and $\|\cdot\|$ be any vector norm. Then, %
\begin{align} 
&\phantom{{}={}}\Eb_{\theta_0\sim \pi, z_{1:n}\sim \pi(z\mid\theta_0)} \Eb_{\theta\sim \pi(\theta\mid z_{1:n})} \|\theta - \bar\theta_n\|^2   
\nonumber\\&=\Eb_{\theta_0\sim \pi, z_{1:n}\sim \pi(z\mid\theta_0)} %
\|\theta_0 - \bar\theta_n\|^2.  \label{eq:fact}
\end{align} 
\end{restatable}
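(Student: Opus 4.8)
The plan is to recognise that both sides of Eq.~\eqref{eq:fact} are expectations of the \emph{same} function $(\theta, z_{1:n})\mapsto \|\theta-\bar\theta_n(z_{1:n})\|^2$ under one and the same joint law of parameter and data, so that the claimed identity is really just the law of total expectation made explicit through Bayes' theorem. The engine of the argument is the observation that the two generative descriptions of the pair $(\theta_0, z_{1:n})$ agree: sampling $\theta_0\sim\pi$ and then $z_{1:n}\sim\pi(\cdot\mid\theta_0)$ produces the same joint distribution as sampling $z_{1:n}$ from the marginal (prior predictive) $m(z_{1:n}):=\int \pi(z_{1:n}\mid\theta)\pi(\theta)\intd\theta$ and then $\theta_0$ from the posterior. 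In density form this is Bayes' theorem,
\[
\pi(\theta_0)\,\pi(z_{1:n}\mid\theta_0) = m(z_{1:n})\,\pi(\theta_0\mid z_{1:n}).
\]

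First I would treat the left-hand side. Its inner posterior expectation $V(z_{1:n}):=\Eb_{\theta\sim\pi(\theta\mid z_{1:n})}\|\theta-\bar\theta_n\|^2$ depends on $z_{1:n}$ only, so the outer $\theta_0$ integrates out of the generative joint and the left-hand side collapses to $\Eb_{z_{1:n}\sim m}\,V(z_{1:n})$, i.e.\ the expected posterior ``variance'' of $\theta$ around its mean $\bar\theta_n$. Next I would treat the right-hand side by applying the Bayes factorisation above to turn the expectation over the generative joint into an expectation over $m$ followed by the posterior, and then merely relabel the dummy parameter $\theta_0$ as $\theta$; this rewrites the right-hand side as $\Eb_{z_{1:n}\sim m}\Eb_{\theta\sim\pi(\theta\mid z_{1:n})}\|\theta-\bar\theta_n\|^2 = \Eb_{z_{1:n}\sim m}\,V(z_{1:n})$, which is exactly the expression obtained for the left-hand side. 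The two sides therefore coincide.

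I expect essentially no analytic obstacle here: the content is a single application of Bayes' theorem plus Fubini. The only points requiring care are measure-theoretic bookkeeping---existence of a regular version of the posterior and integrability of $\|\theta-\bar\theta_n\|^2$ so that the conditional expectations and the interchange of integrals are legitimate---which are harmless since $\theta\in\R^A$ is finite-dimensional (so all norms are equivalent) and the relevant moments are assumed finite. I would also note, as a sanity remark, that neither the squaring nor the specific choice of $\bar\theta_n$ as the posterior mean is used in establishing the equality: the identical computation gives $\Eb\big[\Eb(\phi(\theta,z_{1:n})\mid z_{1:n})\big]=\Eb\,\phi(\theta_0,z_{1:n})$ for any integrable $\phi$ and any data statistic $\bar\theta_n(z_{1:n})$. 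The posterior-mean and squared-norm choices matter only for the \emph{interpretation} of Eq.~\eqref{eq:fact}, namely as equating the expected posterior variance with the expected squared error of the posterior-mean estimator when data is drawn from the prior.
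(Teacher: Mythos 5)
Your proof is correct and is essentially the paper's own argument written out in full: the paper's one-line proof observes that $\theta$ and $\theta_0$ are conditionally i.i.d.\ given $z_{1:n}$ (with $\bar\theta_n$ their common conditional mean), which is exactly the Bayes-swap $\pi(\theta_0)\,\pi(z_{1:n}\mid\theta_0)=m(z_{1:n})\,\pi(\theta_0\mid z_{1:n})$ that you make explicit before relabelling the dummy variable. Your closing observation that the equality uses neither the squared norm nor the posterior-mean choice of $\bar\theta_n$ (only that the conditional law of $\theta_0$ given the data is the posterior) is likewise accurate and consistent with the paper's reasoning.
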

The left-hand side in Eq.~\eqref{eq:fact} is the trace of the posterior covariance (variance) and thus measures epistemic uncertainty. 
The right-hand side is the  %
estimation error for the true parameter. %
Thus, \Cref{fact:posterior-variance-est-error} states that \textit{epistemic uncertainty provides a quantification for the average-case estimation error}. %
Note that Eq.~\eqref{eq:fact} only applies to data from the prior predictive distribution, %
and thus not necessarily to the real observations. %
Nonetheless, %
a significant deviation of a model from the known scaling behaviour of the estimation error will indicate non-conformance with any reasonable Bayesian models. 
This is precisely our starting point to derive another diagnostic for Bayesian ICL systems.

\begin{figure*}[t] %
    \centering 

    \includegraphics[width=0.495\textwidth,trim={0.9cm 0.2cm 0 0.5cm}]{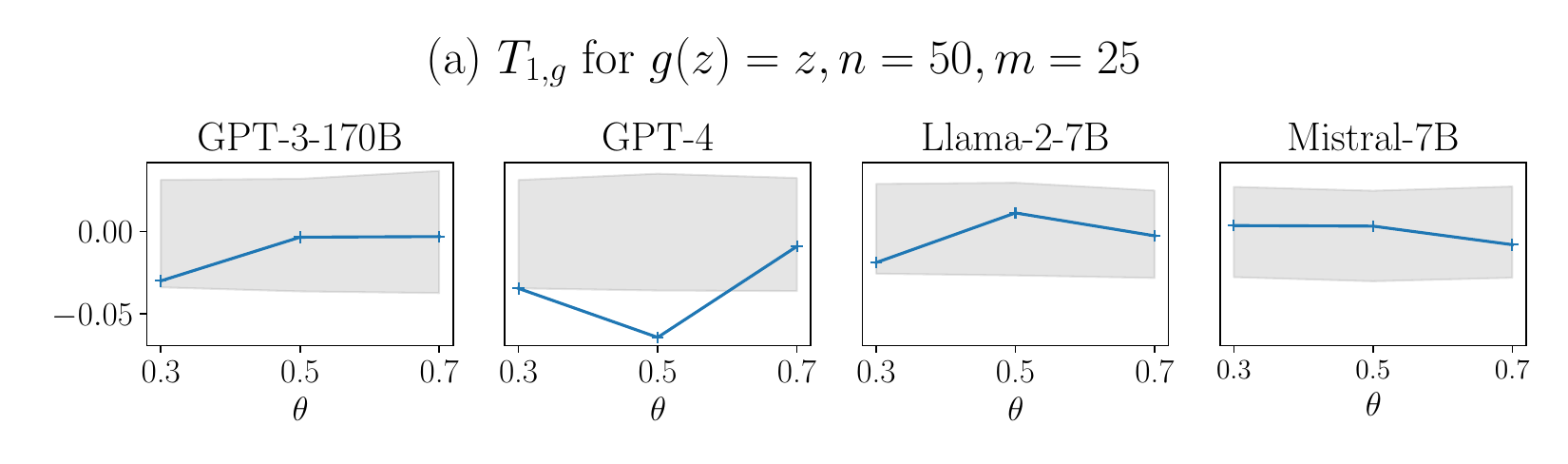}
    \hfill
    \includegraphics[width=0.495\textwidth,trim={0.1cm 0.2cm 0.8cm 0.5cm}]{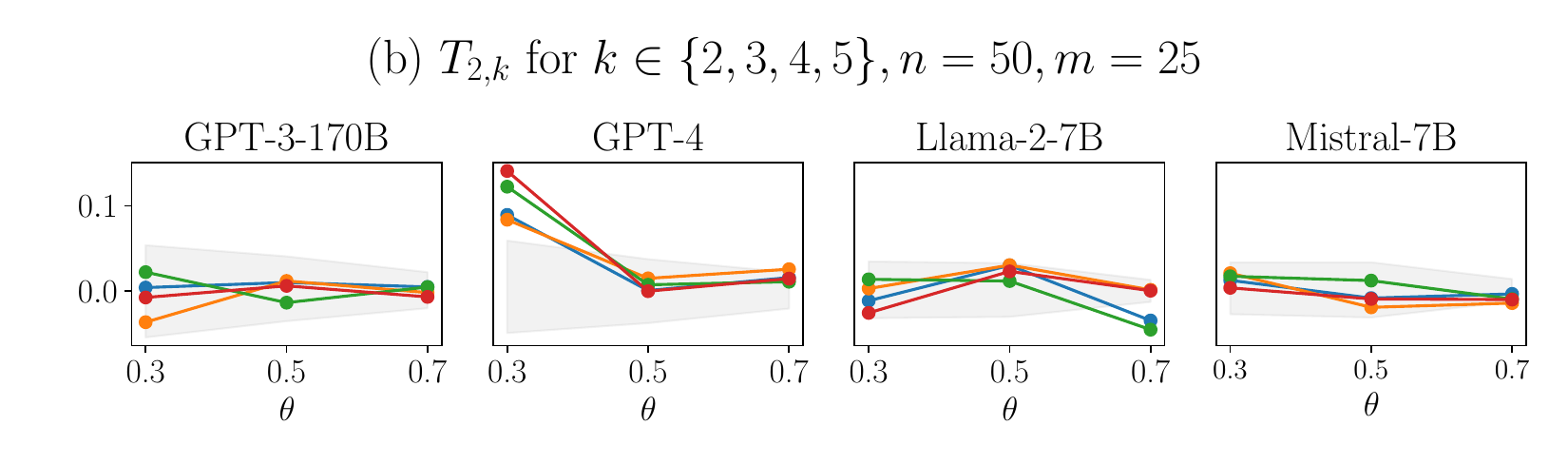}
    
    \includegraphics[width=0.49\textwidth,trim={0.9cm 0.2cm 0 0.5cm}]{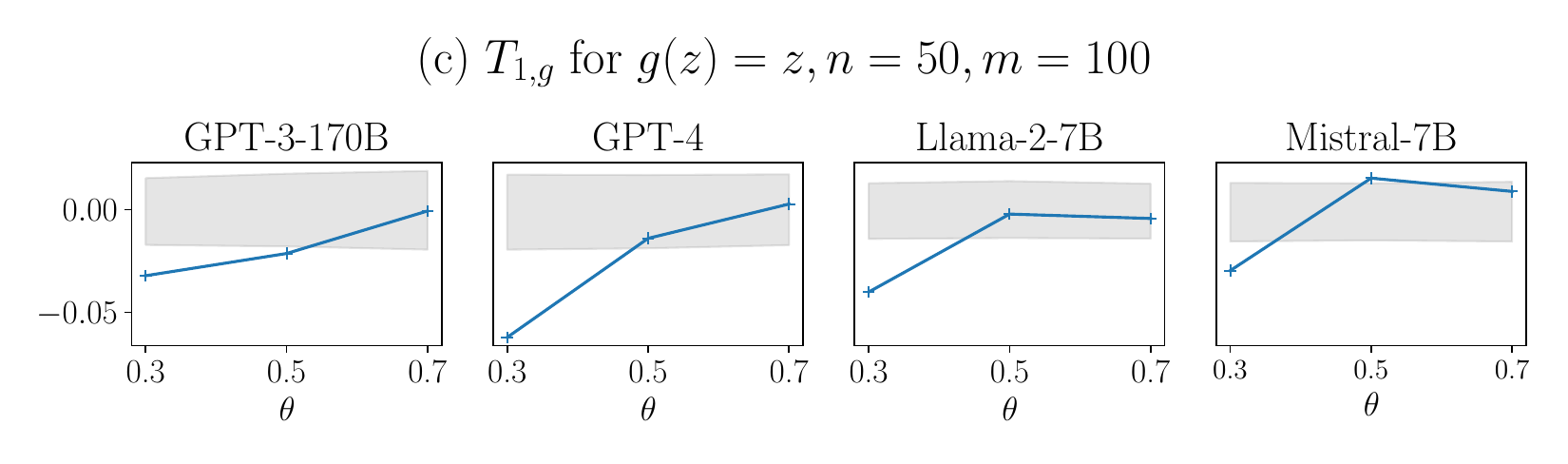}
    \hfill
    \includegraphics[width=0.49\textwidth,trim={0.1cm 0.2cm 0.8cm 0.5cm}]{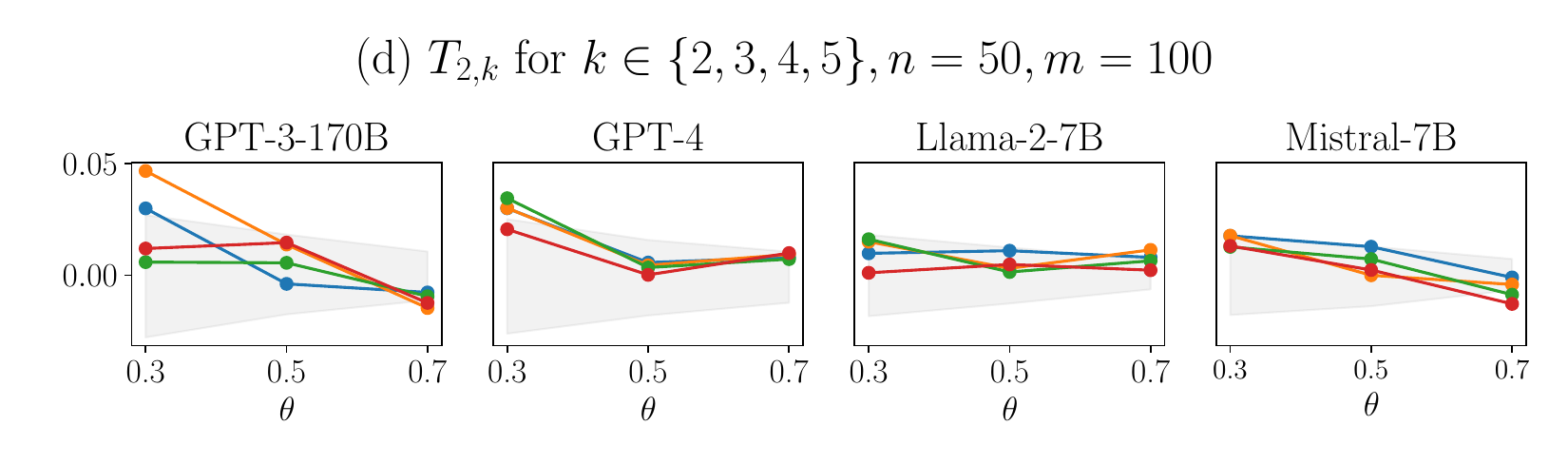}

    \caption{
    Checking the martingale property on %
    Bernoulli experiments.
    Each data point represents a test statistic (y-axis) %
    evaluated for an LLM, %
    as derived in %
    \S\ref{sec:CID-check}. Subplot and x-axis correspond to choices of Bernoulli probabilities and LLMs. %
    Shade indicates the $95\%$ confidence interval from a reference Bayesian model. %
    }  
    \label{fig:cid-bern-main}
\end{figure*}

As discussed in \S \ref{sec:The martingale property enables a principled notion of uncertainty}, we use sample paths generated by an LLM to approximate a martingale posterior and estimate its epistemic uncertainty. 
Here, we characterise epistemic uncertainty through the trace of the posterior covariance of the martingale posterior, the `spread' of the distribution.  %
Because the sample paths we use are finite (see \S \ref{sec:acceptable-deviations}) we cannot study the exact martingale posterior directly, which can only be recovered with infinite samples.  %
Instead, we study the sampling distribution of the maximum likelihood estimate (MLE) on the first $m$ samples:
$ %
\hat\theta_m := \arg\max_{\theta\in\Theta} \sum_{i=1}^{m} \log p_\theta(Z_{n+i}),
$
where $p_\theta$ is the known parametric likelihood. %
We measure the spread of this distribution using its \textit{inter-quartile range} 
\begin{equation}\label{eq:approxmp-iqr}
T_3 =    %
Q_{0.75}(\{\hat\theta_m^{(j)}\}_{j=1}^J) - 
Q_{0.25}(\{\hat\theta_m^{(j)}\}_{j=1}^J),
\end{equation}
where $\hat\theta_m^{(j)}$ denotes the MLE using the $j$-th sample path $\{z_{n+i}^{(j)}\}_{i=1}^m$, and $Q_{0.25}$ and $Q_{0.75}$ are the $0.25$- and $0.75$-quantiles. 
In our experiments in \S \ref{sec:Experiments} we consider scenarios where the true data distribution is defined by regular parametric models. %
In such cases the optimal (squared) estimation error for the true parameter 
scales $O(d/n)$ where $n$ is the ICL dataset size and $d$ is the dimension of the parameter, which is also the minimax lower bound %
\citep[Ch.~8]{van2000asymptotic}.
When choosing $m=\Theta(n)$, a reference Bayesian model will also have the $O(d/n)$ scaling behaviour following classical posterior contraction results in statistics; see App.~\ref{app:MP-approx}.
Therefore, we can compare the asymptotic scaling of $T_3$ %
between an LLM and a reference Bayesian parametric model through the same sampling-based procedure. %
If the scaling behaviour of $T_3$ from our LLM %
deviates from that %
of the reference Bayesian model, we can conclude that the LLM either exhibits a marked loss of estimation efficiency, %
or does not maintain a correct notion of epistemic uncertainty at all. 
Both characteristics contradict a Bayesian ICL system and are undesirable.

\section{Experimental Analysis on LLMs}  %
\label{sec:Experiments}

In this section, we experimentally probe whether ICL in state-of-the-art LLMs is Bayesian using the diagnostics discussed in \S \ref{sec:sec3} and corresponding test statistics $T_{1,g}, T_{2,k}, T_3$.
We provide our code base on \href{https://github.com/meta-inf/bayes_icl}{\url{https://github.com/meta-inf/bayes_icl}}.

\subsection{Experiment Setup}
\label{sec:exp-setup}

We consider three types of synthetic datasets $z_{1:n}$: %
\begin{itemize}[leftmargin=*,noitemsep, topsep=0pt, partopsep=0pt]
\item \textbf{Bernoulli}: $Z_i\sim \mathrm{Bern}(\theta)$, where $\theta\in\{0.3, 0.5, 0.7\}$;
\item \textbf{Gaussian}: $Z_i\sim \mathcal{N}(\theta, 1)$, where $\theta\in\{-1, 0, 1\}$;
\item A synthetic \textbf{natural language} experiment representing a prototypical clinical diagnostic task, where $Z_i=(X_i,Y_i)$ indicate the presence or absence of a symptom and disease as a text string for the $i$-th patient, respectively. %
Further, %
$
X_i \sim \mathrm{Bern}(0.5), Y_i\vert X_i\sim \mathrm{Bern}(0.3 + 0.4 X_i). 
$
\end{itemize}
On purpose, we reduce our experimental setup to these minimum viable test beds where the ground-truth latent parameters are known, stripping away the convoluted latent complexity of in-the-wild NLP data.  %
We use the following LLMs:
\href{https://huggingface.co/meta-llama/Llama-2-7b-hf}{\texttt{llama-2-7B}}
with 7B parameters (\citealp{touvron2023llama}), 
\href{https://huggingface.co/mistralai/Mistral-7B-v0.1}{\texttt{mistral-7B}}
(\citealp{jiang2023mistral}), 
\texttt{gpt-3} \citep{brown2020language} with 2.7B and 170B parameters, 
\texttt{gpt-3.5}, 
and \texttt{gpt-4} \citep{openai2023gpt4}\footnote{
We only use \texttt{gpt-4} in a subset of experiments (Fig.~\ref{fig:cid-bern-main}, Fig.~\ref{fig:icl-both} in the text) 
due to API and resource limitations (App.~\ref{app:additional experiment details}).  
}.

In all experiments we compute %
test statistics on LLM samples, and compare their behaviour with the same statistics evaluated on samples from a reference Bayesian model. 
More specifically, in \S\ref{sec:exp-cid} we compare the statistics obtained from LLMs 
with the bootstrap confidence intervals (CIs) derived from the reference Bayesian model. 
A deviation will thus indicate that 
the LLM is unlikely to be a good approximation of the reference Bayesian model. %
More importantly, when $n$ becomes moderately large, 
the Bernsten von-Mises theorem \citep{van2000asymptotic} applies:  
the deviations then %
imply that the LLM is highly likely deviating from all \textit{reasonable Bayesian models}, namely those satisfying the %
regularity conditions of the theorem. 
This is because the theorem guarantees that the test statistics derived from all such models have asymptotically\footnote{
We note that the asymptotic equivalence results are relevant in our setting. As a concrete example, in the setting of Fig.~\ref{fig:cid-bern-main}~(a), the CIs obtained by using $\mathrm{Beta}(1,11)$ and $\mathrm{Beta}(1,1)$ as the reference model are practically indistinguishable; the difference is on the order of $10^{-4}$. 
} equivalent distributions.

We refer to App.~\ref{app:additional experiment details} for additional experimental details, such as the prompt format, tokenization, and computational requirements, as well as additional experimental results.

\subsection{Checking the Martingale Property} \label{sec:exp-cid}

We first check if state-of-the-art LLMs satisfy the martingale property.   %
As we discussed in \S\ref{sec:Background}, this is a necessary condition for an exchangeable Bayesian ICL system.

\paragraph{Bernoulli experiment. } 
Fig.~\ref{fig:cid-bern-main} reports the results of the Bernoulli experiments with $n=50$ observed samples, LLM sample paths of length $m\in\{n/2, 2n\}$, %
and datasets with ground-truth mean $\theta \in \{.3,.5,.7\}$. 
As discussed in \S\ref{sec:CID-check} and \S\ref{sec:exp-setup} above, we compute the test statistics $T_{1,g}$ and $T_{2,k}$ on %
$J$ sample paths %
generated by an LLM, and compare them with %
bootstrap CIs (of high confidence, see scale of y-axis)  %
obtained from a reference Bayesian model. Here we define the reference model using a Bernoulli likelihood and a non-informative $\mathrm{Beta}(1,1)$ prior. 

For short sample paths of length $m=n/2$ 
(subplots (a) and (b)), most LLMs lead to test statistics that are generally within the %
respective CIs, with the main exception being \texttt{gpt-4} ($\theta\in\{0.3,0.5\}$), 
indicating a mostly adherence to the martingale property. %
However, for longer sample paths with $m=2n$ (subplots (c) and (d)), %
more frequent deviations from the CIs are observed. 
For brevity, full 
results for other choices of $n$ and LLMs 
are deferred to App.~\ref{app:further-experimental-results}. 
The findings are generally consistent across all choices of $n$. 
We also observe \texttt{gpt-3.5} to perform better than \texttt{gpt-4} but worse than \texttt{gpt-3-170b}. 
As we discuss in App.~\ref{app:further-experimental-results} the latter observation  
may be explained by the fact that \texttt{gpt-3.5} and \texttt{gpt-4} have undergone instruction tuning %
\citep{ouyang2022training}. 
In summary, in the Bernoulli experiments the LLMs generally adhere to the martingale property in short sampling horizons, but in longer horizons demonstrate a significant deviation from the martingale property and hence the Bayesian principle.

\paragraph{Gaussian experiment. } 
In Fig.~\ref{fig:cid-gauss-main} we present results on the Gaussian experiment with $\theta=-1,n=100, m=n/2$, again performing both checks of the martingale property and using a reference Bayesian model with the non-informative prior $\mathcal{N}(0, 100)$.
As we can see, all models except \texttt{gpt-3.5} 
demonstrate clear deviation from the martingale property. %
Additional results for \texttt{gpt-3.5} in App.~\ref{app:Additional experimental details and results} present our diagnostics with other choices of $(n,m,\theta)$, demonstrating a deviation from the predictive distribution %
of the reference Bayesian posterior. %
In conclusion, the presented evidence on the Gaussian experiment falsifies 
our hypothesis of Bayesian behaviour with the tested LLMs.

\begin{figure}[ht]
    \centering
    \includegraphics[width=.95\linewidth,trim={1.5cm 0.5cm 1.5cm 0.5cm}]{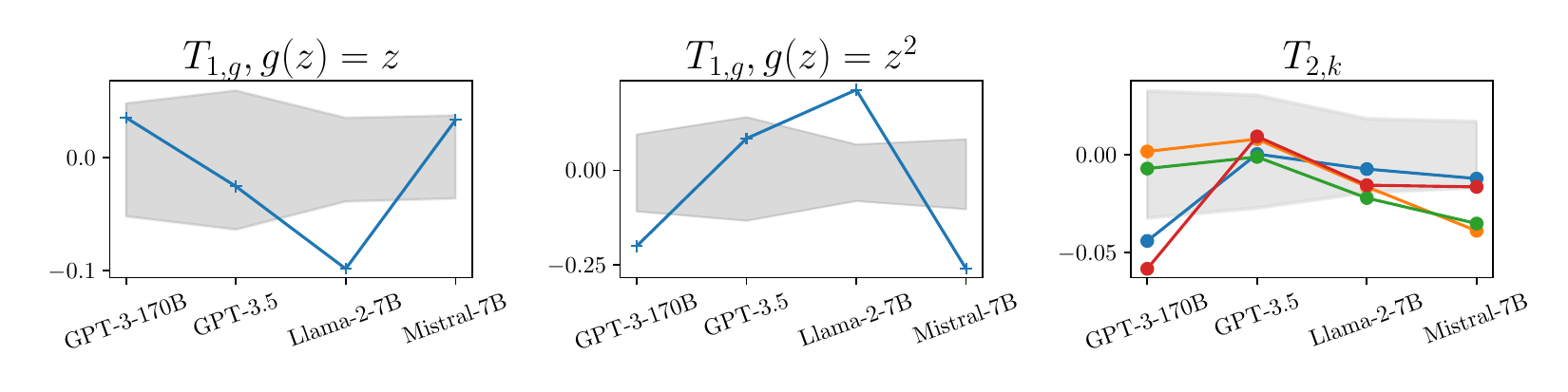}
    \caption{
    Checking the martingale property %
    on Gaussian experiments.
    We present runs with $\theta=-1,n=100,m=50$ from different LLMs (x-axis) with test functions $g(z)=z$ and $g(z)=z^2$.  %
    See Fig.~\ref{fig:cid-bern-main} for further details.  %
    }%
    \label{fig:cid-gauss-main}
\end{figure}

\paragraph{Synthetic natural language experiment. } 
In %
Fig.~\ref{fig:icl-both} we present our results for the natural language experiment with $n=80, m=40, g(z)=z$ %
using the GPT models. %
Here, we compute the test statistics on samples separated by the Bernoulli-distributed value of $X_i$ (see App.~\ref{app:additional experiment details} for details). 
As we can see, %
both \texttt{gpt-3.5} and \texttt{gpt-4} %
demonstrate %
deviation from a reference Bayesian posterior. 
This provides further evidence of violations of the martingale property in settings where natural language (instead of numbers) 
is used.   %

\begin{figure}[ht]
\centering
\subfigure[GPT-3-170B]{
\includegraphics[width=0.32\linewidth,clip,trim={0.2cm 0.4cm 0.2cm 0.2cm}]{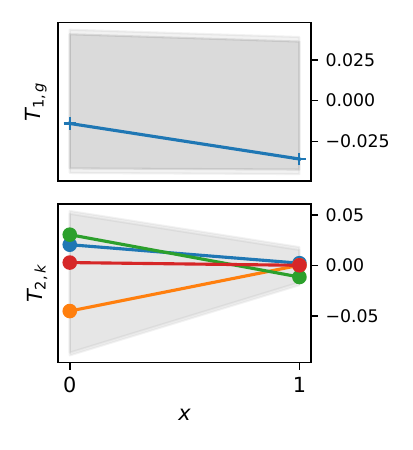}}\hfill
\subfigure[GPT-3.5]{
\includegraphics[width=0.32\linewidth,clip,trim={0.2cm 0.4cm 0.2cm 0.2cm}]{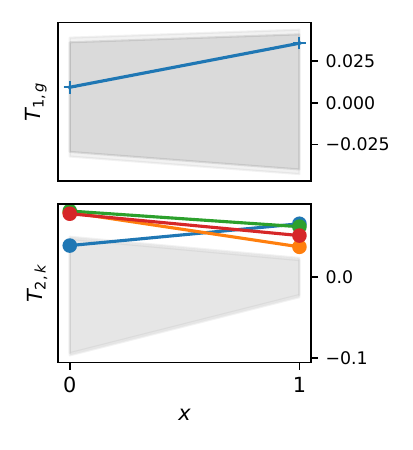}}\hfill
\subfigure[GPT-4]{
\includegraphics[width=0.32\linewidth,clip,trim={0.2cm 0.4cm 0.2cm 0.2cm}]{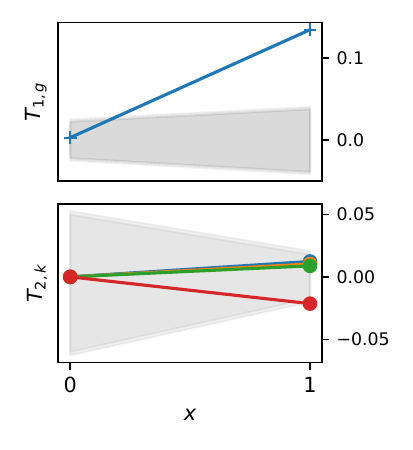}}
\caption{
    Checking the martingale property %
    on the natural language experiment.
    We present both checks %
    with test statistics %
    computed separately for each value of $X_i$ (x-axis). %
    See Fig.~\ref{fig:cid-bern-main} for further details.
}\label{fig:icl-both}
\end{figure}

\begin{figure}[ht]
\centering
\includegraphics[width=0.95\linewidth,clip,trim={0 0.6cm 0 0.2cm}]{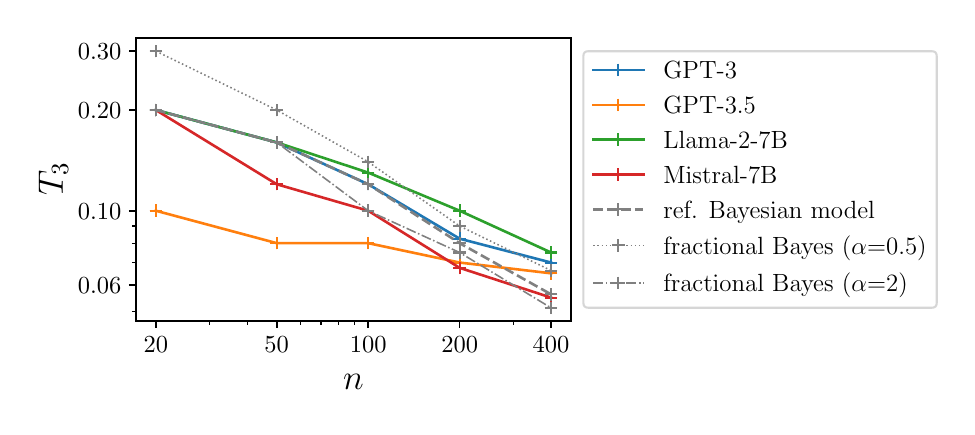}
\caption{
    Scaling of epistemic uncertainty on the Bernoulli experiment: the test %
    statistic $T_3$ %
    (\S \ref{sec:additional-checks}) computed on LLMs, %
    compared with 
    Bayesian %
    and fractional Bayesian models.  %
}\label{fig:scaling-epis}
\end{figure}

\subsection{Checking Epistemic Uncertainty of LLMs}
\label{sec:exp_epistemic}

In this subsection we analyse the scaling behaviour of an LLM's uncertainty.
In %
Fig.~\ref{fig:scaling-epis} 
we measure $T_3$ (y-axis on a log-scale) and compare the approximate martingale posterior of an LLM with a reference Bayesian model when increasing the number of observed samples $n$ (x-axis).
We consider a Bernoulli experiment with $\theta=0.5$ as it is the only experimental setting where, with a short sampling horizon of %
$m=n/2$, all LLMs approximately adhere to the martingale property.
In addition to the standard reference Bayesian model, we also consider two $\alpha$-fractional Bayesian posteriors \citep{bhattacharya2019bayesian}, which are generalisations of %
the Bayesian posterior that exhibit a $O(d/\alpha n)$ scaling for its epistemic uncertainty.
They allow us to check the weaker hypothesis whether an LLM's epistemic uncertainty scales at least up to the correct order of magnitude.

We observe that the asymptotic rate of \texttt{llama-2-7b} and \texttt{gpt-3.5} is slower than that of a Bayesian model, which suggests %
inefficiency as discussed in \S \ref{sec:additional-checks}.
Furthermore, \texttt{gpt-3.5} demonstrates over-confidence in the small-sample %
regime. 
The scaling of 
\texttt{gpt-3-170b} and \texttt{mistral-7b} are closer to the Bayesian model, even though not exactly matching the latter. 
This finding is interesting as %
on the Bernoulli experiments, \texttt{gpt-3-170b} and 
\texttt{mistral-7b} also demonstrate the best adherence to the martingale property.

\section{Conclusion}

In this work we stated the martingale property as a fundamental requirement of a Bayesian learning system for exchangeable data, and discussed its desirable consequences if satisfied by an LLM. Based on this property we derived three different diagnostics that allowed us to check whether LLMs adhere to the Bayesian principle on synthetic in-context learning tasks. We presented stark evidence that state-of-the-art LLMs violate the martingale property, and hence falsified the hypthesis that ICL in LLMs is Bayesian.

Our investigation is particularly relevant to a recent line of work that investigates LLM-based ICL for tabular data modelling: for prediction on noisy tabular datasets \citep{manikandan_language_2023,yan_making_2024}, the martingale property would enable us to diagnose the predictive uncertainty; and for synthetic data generation \citep{borisov2022language,hamalainen2023evaluating,veselovsky2023generating}, 
it is vital to ensuring valid inference based on imputations of missing data (\S\ref{sec:martingale}). %
It is thus of practical interest to develop models that better adhere to the martingale property. 

The primary limitation of our work is the (intentional) restriction to small-scale, synthetic datasets, which are different from common NLP applications. 
We note that while our diagnostics are designed for synthetic problems, 
they reflect a broader principle: Bayesian epistemic uncertainty %
can be extracted from black-box models by examining the correlation structure in sequential predictions. 
This is clearly shown by the variance estimator in Example~\ref{ex:corr-1-and-mean-param}, and by the fact that MLE on sampled paths approximates the Bayesian posterior (\S\ref{sec:additional-checks}). 
Future work could investigate generalisations of this approach.

More broadly, the RCT example in \S\ref{sec:Introduction} %
can arguably be viewed as the simplest type of decision task involving multi-step reasoning, as the right decision (here based on an average treatment effect) is only naturally determined after imputing all missing samples. 
Thus, it would be interesting to investigate analogies to the hallucination behaviour we have identified for ICL in more complex reasoning tasks such as those involving chain-of-thought prompting \citep{wei2022chain}. 
Lastly, it may be worth to consider fine-tuning objectives to achieve an idealised Bayesian behaviour with a model after pretraining, but before deployment.

\section*{Impact Statement} %

This paper presents work whose goal is to advance the field of Machine Learning. There are many potential societal consequences of our work, none which we feel must be specifically highlighted here.
We refer to App. \ref{app:Negative societal impact} for further discussion.

\section*{Acknowledgments}

Fabian Falck acknowledges the receipt of studentship awards from the Health Data Research UK-The Alan Turing Institute Wellcome PhD Programme (Grant Ref: 218529/Z/19/Z).
Ziyu Wang acknowledges support from Novo Nordisk.
Chris Holmes acknowledges support from the Medical Research Council Programme Leaders award MC\_UP\_A390\_1107, The Alan Turing Institute, Health Data Research, U.K., and the U.K. Engineering and Physical Sciences Research Council through the Bayes4Health programme grant.

This research is supported by research compute from the Baskerville Tier 2 HPC service. 
Baskerville is funded by the EPSRC and UKRI through the World Class Labs scheme (EP/T022221/1) and the Digital Research Infrastructure programme (EP/W032244/1) and is operated by Advanced Research Computing at the University of Birmingham.
We further acknowledge the receipt of OpenAI API credits through the OpenAI Researcher Access Program.

\bibliography{8_bib.bib}
\bibliographystyle{icml2024}

\clearpage
\clearpage
\appendix

{\textbf{\Large Appendix for \textit{Is In-Context Learning in Large Language Models Bayesian? A Martingale Perspective}}}

\section{Proofs of Theoretical Statements in the Main Text}
\label{app:proof}

\begin{fact}
Any exchangeable random sequence $\{Z_i\}$  must be conditionally identically distributed. %
\end{fact}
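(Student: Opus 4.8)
The plan is to show that exchangeability, as formalised in Eq.~\eqref{eq:exc}, implies the martingale (conditionally identically distributed) property of Eq.~\eqref{eq:martingale}. The strategy is to exploit permutation invariance of the finite-dimensional laws in two stages: first establish equality of the \emph{joint} laws of $(Z_{1:n}, Z_{n+1})$ and $(Z_{1:n}, Z_{n+k})$, and then descend from equal joint laws to equal conditional laws.

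First I would fix arbitrary integers $n,k>0$ and work with the first $\ell := n+k$ coordinates. Consider the transposition $\sigma$ on $\{1,\ldots,\ell\}$ that swaps positions $n+1$ and $n+k$ and fixes every other index. This is a valid $\ell$-permutation, so exchangeability Eq.~\eqref{eq:exc} gives that $(Z_1,\ldots,Z_\ell)$ and $(Z_{\sigma(1)},\ldots,Z_{\sigma(\ell)})$ have the same law. Next I would project both random vectors onto the fixed coordinate set $\{1,\ldots,n,n+1\}$: since $\sigma$ fixes $1,\ldots,n$ and sends $n+1 \mapsto n+k$, the left projection is $(Z_{1:n},Z_{n+1})$ while the right projection is $(Z_{1:n},Z_{n+k})$. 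Because equality in distribution is preserved under taking the same sub-collection of coordinates, this yields that $(Z_{1:n},Z_{n+1})$ and $(Z_{1:n},Z_{n+k})$ have the same joint law; in particular they share the same marginal law for the conditioning block $Z_{1:n}$.

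Finally I would pass from equal joint laws to equal conditional laws. In the discrete, finitely-supported setting assumed in the paper, this is immediate from the ratio definition: for every realisation with $p_M(Z_{1:n}=z_{1:n})>0$,
\begin{align*}
p_M(Z_{n+1}\!=\!z \mid Z_{1:n}\!=\!z_{1:n})
&= \frac{p_M(Z_{1:n}\!=\!z_{1:n}, Z_{n+1}\!=\!z)}{p_M(Z_{1:n}\!=\!z_{1:n})} \\
&= \frac{p_M(Z_{1:n}\!=\!z_{1:n}, Z_{n+k}\!=\!z)}{p_M(Z_{1:n}\!=\!z_{1:n})} \\
&= p_M(Z_{n+k}\!=\!z \mid Z_{1:n}\!=\!z_{1:n}),
\end{align*}
where the middle equality uses the joint-law identity established above. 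This is exactly Eq.~\eqref{eq:martingale}, so $\{Z_i\}$ is conditionally identically distributed.

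The only genuine obstacle lies in this last step beyond the discrete case, where conditional probabilities are not given by an elementary ratio. There I would invoke the essential uniqueness of regular conditional distributions (disintegration): since the two joint laws coincide and the conditioning marginal of $Z_{1:n}$ is common, the two conditional kernels $Z_{n+1}\mid Z_{1:n}$ and $Z_{n+k}\mid Z_{1:n}$ must agree for $p_M(Z_{1:n})$-almost every $z_{1:n}$. Everything else -- the choice of transposition, the projection, and the preservation of equality in law under projection -- is routine.
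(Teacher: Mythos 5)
Your proof is correct. Note that the paper does not actually spell out an argument for this fact: its ``proof'' is a pointer to \citet[p.~2030]{Berti2004}, so your write-up supplies the elementary argument standing behind that citation rather than matching a proof in the paper. Your route---apply exchangeability (Eq.~\eqref{eq:exc}) with the transposition of positions $n+1$ and $n+k$ inside the first $\ell=n+k$ coordinates, project onto the coordinate set $\{1,\ldots,n,n+1\}$ to obtain that $(Z_{1:n},Z_{n+1})$ and $(Z_{1:n},Z_{n+k})$ share one joint law, and then descend to conditionals via the ratio formula in the discrete case and essential uniqueness of regular conditional distributions in general---is the standard argument and is complete; each step (validity of the transposition, preservation of equality in law under taking a common sub-collection of coordinates, and disintegration) is sound. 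Two caveats you already handle correctly are worth keeping explicit: the conclusion can only hold for conditioning realisations of positive probability (discrete case) or for $p_M(Z_{1:n})$-almost every $z_{1:n}$ (general case), which is the right reading of Eq.~\eqref{eq:martingale} and consistent with the almost-everywhere qualifications the paper uses around Eq.~\eqref{eq:martingale-y}; and the observation that the two joint laws share the same marginal for the conditioning block is exactly what licenses comparing the two conditional kernels. What your version buys is a self-contained, elementary proof at precisely the level of generality the paper needs; what the paper's citation buys is brevity plus access to the surrounding theory in \citet{Berti2004} (e.g., asymptotic exchangeability of conditionally identically distributed sequences), which the paper invokes elsewhere and which your transposition argument alone does not provide.
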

\begin{proof}
See, e.g., \citet[p.~2030]{Berti2004}.
\end{proof}

\prop*
\begin{proof}
It suffices to show the equivalence between the following three statements:
\begin{enumerate}[label=(\roman*)]
\item $Z_{n+1:n+m}\vert Z_{1:n}$ satisfies Eq.~\eqref{eq:martingale} 
\item for all $n'\ge n,k\ge 1$ and integrable function $g$ we have 
$\Eb(g(Z_{n'+k})-g(Z_{n'+1})\vert Z_{1:n}, Z_{n+1:n'})=0$ 
\item 
for all $n'\ge n,k\ge 1$ and integrable $(g, h)$ we have 
$0 %
= \Eb((g(Z_{n'+k})-g(Z_{n'+1}))h(Z_{n+1:n'})\vert Z_{1:n}).
$
\end{enumerate}
The equivalence between (i) and (ii) is trivial.
We have (ii) $\Rightarrow$ (iii) because $
\Eb((g(Z_{n'+k}) - g(Z_{n'+1}))h(Z_{n+1:n'})\mid Z_{1:n}) = 
\Eb(\Eb(g(Z_{n'+k}) - g(Z_{n'+1})\mid Z_{1:n'})h(Z_{n+1:n'})\mid Z_{1:n}) \overset{(ii)}{=} 0.$
To show (iii) $\Rightarrow$ (ii), for any $\sigma(Z_{n+1:n'})$-measurable set $A$ %
let $h := \mathds{1}_A$ be the respective indicator function, so that %
$\Eb((g(Z_{n'+k}) - g(Z_{n'+1})) \mathds{1}_A \mid Z_{1:n}) \overset{(iii)}{=} 0 = \Eb(0\cdot \mathds{1}_A \mid Z_{1:n})$. 
Since this holds for all $A$, it follows by the definition of conditional expectation \citep{kallenberg1997foundations} that 
$\Eb(g(Z_{n'+k}) - g(Z_{n'+1})\mid Z_{1:n'}) = 0$ a.s..
\end{proof}

\martingaleimplications*
\begin{proof}
(i) follows by setting $h(z_{n+1:n'}) \equiv \mathds{1}$ in \eqref{eq:CID-equiv-form}. (ii) follows by setting $g(z) =z, h(z_{n+1:n'})=z_{n'}$. 
\end{proof}

\posteriorvarianceesterror*
\begin{proof}
    This holds because $\theta$ and $\theta_0$ are conditionally independent and identically distributed given $z_{1:n}$, and $\bar\theta_n$ equals the conditional expectation of both random variables. 
\end{proof}

\section{Further Discussion of Theory and Methodology}
\label{app:Additional theoretical details}

\subsection{Additional Background on Martingale Posteriors}\label{app:mp} 

In \S\ref{sec:The martingale property enables a principled notion of uncertainty} we discussed the construction of martingale posteriors in the finite-support case. 
Here, we can construct the martingale posterior by sampling $Z_{n+1:n+m}\vert Z_{1:n}$, which will determine a sample $\theta_{n+m}\vert Z_{1:n}$ as the parameter that indexes the predictive distribution 
$p(Z_{n+m+1}=\cdot\vert Z_{1:n+m}) = p_{\theta_{n+m}}(\cdot)$; and since $\theta_{n+m}\to\theta_\infty$ as $m\to\infty$, we can truncate the process at a large $m\gg n$ to obtain a good approximation for $\theta_\infty$. 

The restriction to finite support is largely for expository simplicity as it allows us to avoid measure-theoretic considerations. 
More generally, it is always possible to view the distribution $p(Z_{n+1}=\cdot | Z_{1:n}) =: \theta_n$ as a random element in a suitable Banach space of measures and the condition in Eq. \eqref{eq:martingale} as requiring 
$\{p(Z_{n+1}=\cdot | Z_{1:n}): n\in\mb{N}\}$ to define a martingale in that space. 
When Doob's theorem applies, the above construction provides a distribution over predictive distributions that quantifies the epistemic uncertainty.

Nonetheless, for tractability and comparability to %
Bayesian parametric posteriors, it is useful to consider the following alternative, `model-based' procedure:
\begin{enumerate}
    \item Sample $Z_{n+1:n+m}\sim p_M(\cdot\vert Z_{1:n})$.
    \item Compute $\hat\theta_m := \arg\max_{\theta\in\Theta} \sum_{j=1}^m \log p(Z_{n+j}\vert\theta)$.
    \item Return $\hat\theta_m$ as an approximate sample from the martingale posterior, defined as the conditional distribution of the pointwise limit $\lim_{m\to\infty}\hat\theta_m$ given $Z_{1:n}$. 
\end{enumerate}
We repeat this procedure to obtain multiple samples $\hat\theta_m$ from the martingale posterior in order to approximate its distribution (see Fig. \ref{fig:figure1} [Centre]).
In the above, $p(Z_i\vert\theta)$ is the likelihood in the Bayesian parametric model. If $\{p_M(Z_{n+j}\vert Z_{1:n+j-1})\}_{j=1}^\infty$ corresponds to a certain posterior predictive defined by the same likelihood, and the model is such that maximum likelihood estimation is consistent, it follows from de Finetti's theorem (applied to $Z_{n+1:}\vert Z_{1:n}$) and consistency that as $m\to\infty$, $\hat\theta_m$ will converge to a random variable $\hat\theta_\infty$ (w.r.t.~the norm and notion of convergence in consistency), and  
the distribution $\hat\theta_\infty\vert Z_{1:n}$ must equal the Bayesian posterior. Applying the same procedure to a more general $p_M$ that satisfies Eq. \eqref{eq:martingale} leads to the methodology in \citet{fong2021martingale}. 

We adopted this `model-based' approach in \S\ref{sec:additional-checks} and for computing the approximate martingale posterior in Fig. \ref{fig:figure1} [centre]. Compared with the former approach, it is easier to implement on ICL tasks where each sample $Z_i$ is represented with multiple tokens and a correctly specified likelihood for the true observations is available; the latter is always true in our synthetic experiments.
More importantly, when $m$ is finite (and not $\gg n$), only with this approach can we compare the sampling distribution of $\hat\theta_m\vert Z_{1:n}$ across different $p_M$, as we explain in the following. This is important in our experiments where we find the LLMs (at best) follow the martingale property within a horizon of $m = \Theta(n)$. 

\subsection{Approximate Martingale Posteriors with Finite Paths}\label{app:MP-approx}

We have claimed that with a finite $m$, the spread of the approximate martingale posterior $\hat\theta_m$ defined as the MLE on $m$ samples (see \S\ref{sec:additional-checks}, or above%
) is comparable between different choices of $p_M$. We now substantiate on this claim. 

Let us first restrict to %
exchangeable (i.e., Bayesian) choices of $p_M$. 
Consider de Finetti's representation for the posterior predictive measure: $Z_{n+1,\ldots}\vert Z_{1:n}$ can be represented through 
$$
\theta_\infty\sim \pi(\cdot\vert Z_{1:n}), ~~ Z_{n+1,\ldots}\overset{iid}{\sim} p(\cdot\vert \theta_\infty)
$$
where the measure $\pi(\cdot\vert Z_{1:n})$ equals the Bayesian posterior, which as discussed in \S\ref{app:mp} equals the exact martingale posterior. Combining the above representation and the fact that $\hat\theta_m$ is a function of $Z_{n+1:n+m}$ leads to $\hat\theta_m\perp Z_{1:n}\vert \theta_\infty$, and %
\begin{align*}
&\phantom{=}\mathrm{Cov}(\hat\theta_m\vert Z_{1:n}) \\
&= 
    \Eb(\mathrm{Cov}(\hat\theta_m\vert \theta_\infty%
    )\vert Z_{1:n}) + 
    \mathrm{Cov}(\Eb(\hat\theta_m\vert \theta_\infty%
    )\vert Z_{1:n}) \\ 
&\approx 
    \Eb(\mathrm{Cov}(\hat\theta_m\vert \theta_\infty%
    )\vert Z_{1:n}) + 
    \mathrm{Cov}(\theta_\infty%
    \vert Z_{1:n}),
\end{align*}
where we dropped the term $\Eb(\hat\theta_m\vert\theta_\infty) - \theta_\infty$ which is the %
bias of MLE and thus a higher-order term for regular models. 
Therefore, the (co)variance overhead $\mathrm{Cov}(\hat\theta_m\vert Z_{1:n}) - \mathrm{Cov}(\theta_\infty\vert Z_{1:n})$ is, 
up to the first order, the average-case error of MLE on $m$ i.i.d.~samples when the true parameter is sampled from the posterior $\pi(\cdot\vert Z_{1:n})$. 
For regular models this is always $\Theta(d/m)$, where the coefficient hidden in the $\Theta$ notation is also comparable across different $p_M$ as long as the Fisher information matrix evaluated at $\theta\sim\pi(\cdot\vert Z_{1:n})$ has a comparable value (e.g., across all choices of $p_M$ that satisfy \textit{consistency}). 
As the martingale posterior covariance $\mathrm{Cov}(\theta_\infty\vert Z_{1:n})$ 
has the same 
$\Theta(d/n)$ scaling across all regular Bayesian models to which the Bernstein von-Mises theorem applies, with a choice of $m = \Theta(n)$, any deviation in the scaling of $\mathrm{Cov}(\hat\theta_m)$---from that of any regular Bayesian model---must be attributable to a different scaling of the exact MP covariance, and thus a deviation from all regular Bayesian models. %

Lastly, we note that 
while we focus on ICL models that are approximately Bayesian, 
the above discussion may also apply to general models that only satisfy the martingale property, since for those models $Z_{n+1,\ldots}\vert Z_{1:n}$ remains asymptotically exchangeable \citep{Berti2004}. 
Moreover, the above discussion applies to inter-quantile range (IQR) as well, because for %
asymptotically normal posteriors the IQR is proportional to the posterior standard deviation; and even for non-normal posteriors, the IQR should still have the same order as the posterior contraction rate by definition. 

\subsection{Acceptable Approximation Errors of Properties (i) and (ii) in \cref{prop:martingale_implications}}\label{app:acceptable-dev}

Even when we restrict to a finite horizon $m$, there can still be expected deviations from Eq. \eqref{eq:martingale}, and thus those in  \cref{prop:martingale_implications}, simply because 
Eq. \eqref{eq:martingale} represents invariance conditions that are not ``hard-wired'' in the LLM's architecture. 
Yet, small violations of %
these equalities should not have practical consequences. 
We now derive the order of what is an acceptable violation %
in the setting of \cref{ex:corr-1-and-mean-param}. 

As discussed in this example, the equalities in \cref{prop:martingale_implications} 
guarantee the expressions for posterior mean and covariance for the %
parameter $\theta$ to have consistently defined values, regardless of the choices of $(n',k)$. 
The posterior mean has the order of $\Theta(1)$ and requires the violation of \Cref{prop:martingale_implications}~(i) to be $o(1)$. The posterior covariance is generally $\Omega(1/n)$ and can be expressed through \Cref{ex:corr-1-and-mean-param} %
as
\begin{align*}
\mathrm{Cov}(\theta\vert Z_{1:n}) &= \Eb(Z_{n+1}Z_{n+k}\vert Z_{1:n}) %
- \Eb(Z_{n+k}\vert Z_{1:n})^2. %
\end{align*}
Therefore, it can have an approximately consistent value %
if 
the equalities in \Cref{prop:martingale_implications} hold approximately \textit{up to an error of $o(1/n)$}. 
Posterior mean and covariance are key quantities in the interpretation of %
predictive uncertainty, which in turn is a major benefit of the martingale property. 
Thus, we consider the above deviation to be acceptable as it already guarantees the approximately consistent interpretation of predictive uncertainty through the martingale property. %

\section{Additional Experimental Details and Results}
\label{app:Additional experimental details and results}

\subsection{Additional Experimental Details}\label{app:additional experiment details}

\paragraph{Test statistics of properties implied by the martingale property. }
We summarise and empirically measure properties (i) and (ii) in Corollary \ref{prop:martingale_implications} using the aggregated statistics
\begin{align}
T_{1,g} &:= \frac{2}{Jm}\sum_{j=1}^J 
    \sum_{i=1}^{m/2} (g(z_{n+i}^{(j)}) - 
     g(z_{n+i+m/2}^{(j)})), \label{eq:check1-g} \\ 
T_{2,k} &:= \frac{1}{Jm} \sum_{j=1}^J\sum_{i=1}^{m-k-1} (z_{n+i+1}^{(j)} -  z_{n+i+k}^{(j)}) z_{n+i}^{(j)}. \label{eq:check2-k}
\end{align}
The statistics $T_{1,g}$ and $T_{2,k}$ are defined using samples $\{z_{n+i}^{(j)}\}$ from $J$ paths generated by an LLM via ICL and correspond to Monte-Carlo estimates of the expectations in properties (i) and (ii). 
To be robust against the possible outlier paths (\S\ref{sec:acceptable-deviations}), we remove sample paths with anomalous mean absolute values using the standard 1.5$\times$IQR rule. 

We compare the observed value of the statistics above evaluated on LLMs with bootstrap confidence intervals computed using a reference Bayesian model (\S\ref{sec:exp-setup}). For the latter, %
we draw $K=300$ sets of completions 
$\{\{z_{bs,n+i}^{(j,k)}: 1\le i\le m, 1\le j\le J\}: 1\le k\le K\}$ from the predictive distribution of the reference Bayesian model, which provides $K$ samples for the test statistics, and compute two-sided confidence intervals using the respective quantiles.

\paragraph{Experimental setup.}
For the first two experiments we vary $n\in\{20, 50, 100\}$, $m\in \{n/2, 2n\}$ and sample $J=200$ paths from the LLMs. 
For the natural language experiments we fix $n=100, m=50, J=80$.  
As non-exchangeable models may demonstrate different behaviour on different permutations of the same dataset, 
for the experiments in \S\ref{sec:exp-cid} we permute the observations when generating each sample path, so that we can produce a single test statistic that summarises each experiment configuration. For the experiments in \S\ref{sec:exp_epistemic}, however, we use a fixed ordering for the observations for all path samples within each run, and report the median inter-quartile range across 9 runs for each configuration. This change is made to avoid (possibly small) deviations from exchangeability from inflating the estimated spread of the posterior. 

For a proper test of the martingale property, it is vital that the model cannot distinguish between the ICL training data $Z_{1:n}$ and its own generations $\{Z_{n+i}\}$. This is trivially true if the LLM takes free-form text as inputs without additional annotation, %
as with \texttt{llama-2-7b}, \texttt{mistral-7b}, and \texttt{gpt-3.5} accessed through the \texttt{Completion} API from OpenAI. 
However, the \texttt{gpt-4} model is only accessible through a different API (\texttt{ChatCompletion}) which includes annotation for user input and model generation in the prompt. To ensure a proper implementation of the checks, we hence call the API $m$ times in generating each path sample. 
In each iteration we sample a single data point, and then append it to the user input part of the prompt. 
This is far less cost-efficient than our use of %
\texttt{gpt-3.5}. Therefore, we only include \texttt{gpt-4} for the Bernoulli experiment with $n\le 50$, and the natural language experiment. 

We discuss prompt design and format in detail below. Here we emphasise that across all tasks, the prompt always includes sufficient information about the true likelihood.

\paragraph{Prompt design and format.}
We use the following prompt format \texttt{<instruction> <observed data> <sampled data>}.  %
\texttt{<instruction>} describes the distribution (i.e. true likelihood) of the observed data and importantly states that the observed samples were drawn i.i.d., i.e. from exchangeable random variables.
\texttt{<observed data>} and \texttt{<sampled data>} lists the observed $z_{1:n}$, and sampled data $\hat{z}_{n+k}$ (if there exists any), respectively.
Samples are represented depending on the experiment: as \texttt{int} values as 1-digit characters (e.g. `1'), \texttt{float} values with 1-digit of precision (e.g. `2.2') or words for synthetic natural language.  %
As a sanity check, we also consider replacing integers with random words (e.g. `tiger' for `1', `hedgehog' for `0'), but did not notice important differences in the LLMs' behaviour.
Each sample is delineated by a separator (e.g. `;').  %

We present exemplary prompts for each dataset below:
\begin{itemize}[leftmargin=*]
    \item A Bernoulli experiment with $n=5$ and $m=2$: 
\textit{``Provided are independent, identically distributed tosses of a coin, which flips 1 with probability p where p is unknown: 1;0,0,1,0,0;1''}.
\item A Gaussian experiment with $n=2$ and $m=3$: 
\textit{``Provided are independent, identically distributed draws from a Gaussian, with fixed but unknown mean and unit variance: 1.1,0.8,1.3,1.0,0.9''}. 
\item The the natural language experiment: 
\textit{
``You will make predictions for a novel disease. The observed dataset contains
records for multiple subjects which are assumed to be independent and identically distributed. For each subject there are two binary variables, indicating fever and disease diagnosis, respectively. Output your prediction for the disease diagnosis of the next subject.\textbackslash n
Id: 0\textbackslash n
Fever: Y\textbackslash n
Diagnosis: N
\ldots''
}
\end{itemize}

Other work represents both \texttt{int} and \texttt{float} numbers as a space-separated string of digits with fixed precision, where each number is separated by a semi-colon.
This guarantees a per-digit tokenisation that was observed to be beneficial in the context of time series forecasting and further minimises the required number of tokens per number as the decimal point is redundant \cite{gruver2023large}.
We did not opt for this representation and corresponding tokenisation for two reasons: 
First, initial experiments with GPT-2 showed deteriorating sampling performance, where the model often hallucinated unrelated content.
Second, and related to the first point, this representation is somewhat `out-of-distribution' and probably unseen in the training distribution, which could limit and constrain any conclusions made in our experiments.
Note that because of the tokenisation, in \S \ref{sec:Experiments}, the Gaussian experiment is more difficult than the Bernoulli experiment (or any dataset with single-token samples) as the LLM is required to learn the correlation structure between consecutive tokens representing a real-valued number.

\begin{figure*}[p]
\centering 
\subfigure[$T_{1,g}$ for $g(z)=z, n=20, m=10$]{\includegraphics[width=0.48\linewidth]{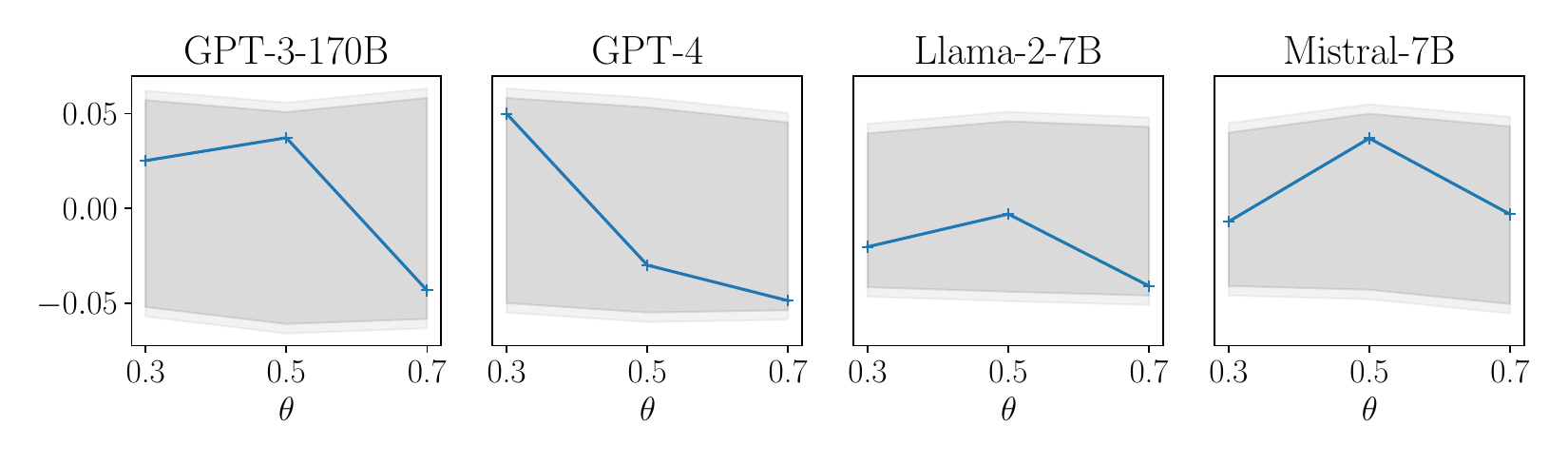}}
\subfigure[$T_{2,k}$ for $k\in\{2,3,4,5\}, n=20, m=10$]{\includegraphics[width=0.48\linewidth]{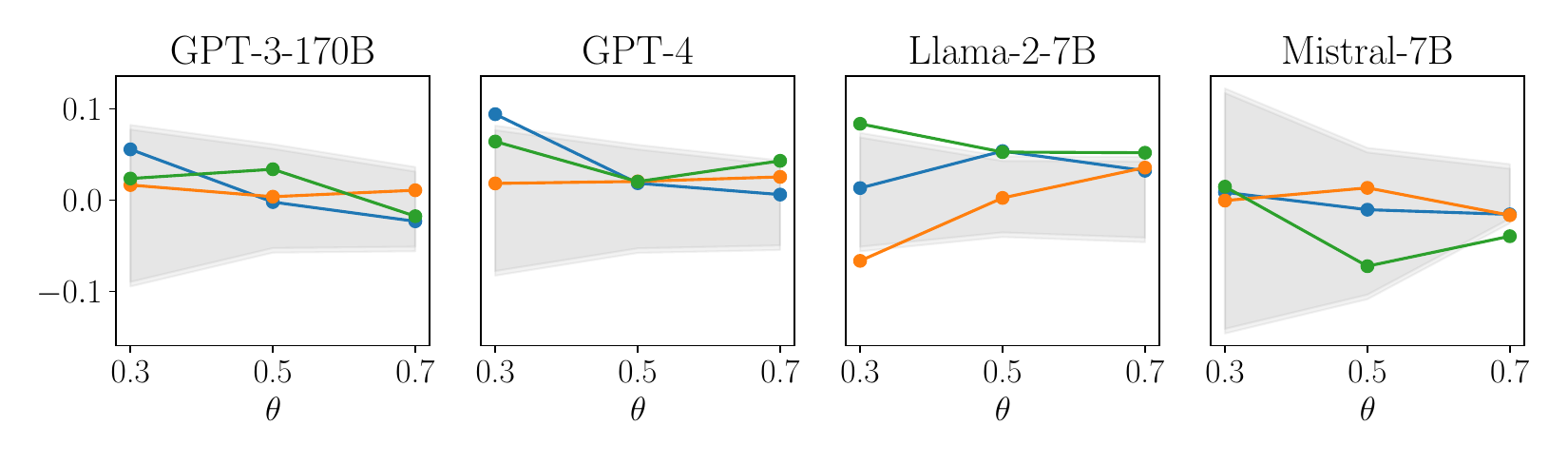}}
\subfigure[$T_{1,g}$ for $g(z)=z, n=100, m=50$]{\includegraphics[width=0.48\linewidth]{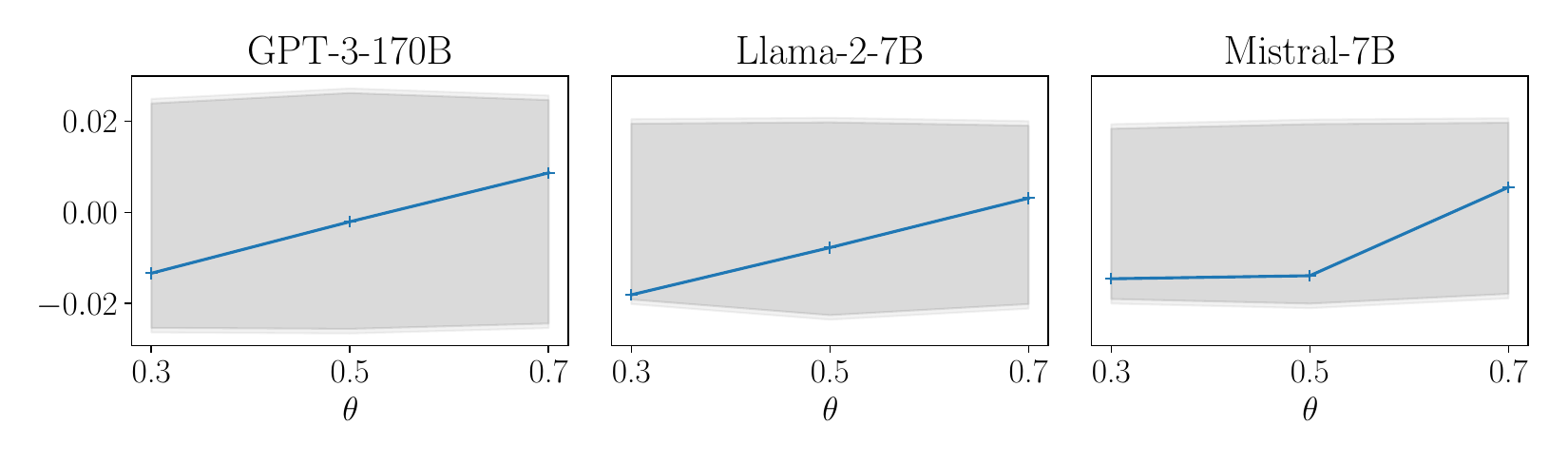}}
\subfigure[$T_{2,k}$ for $k\in\{2,3,4,5\}, n=100, m=50$]{\includegraphics[width=0.48\linewidth]{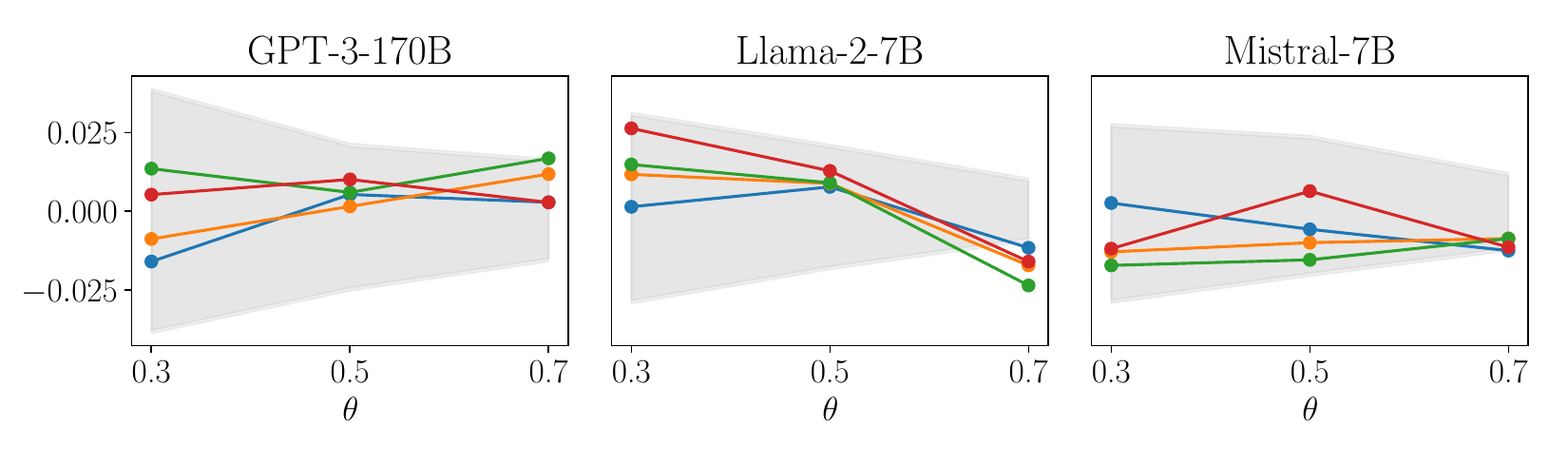}}
\subfigure[$T_{1,g}$ for $g(z)=z, n=20, m=40$]{\includegraphics[width=0.48\linewidth]{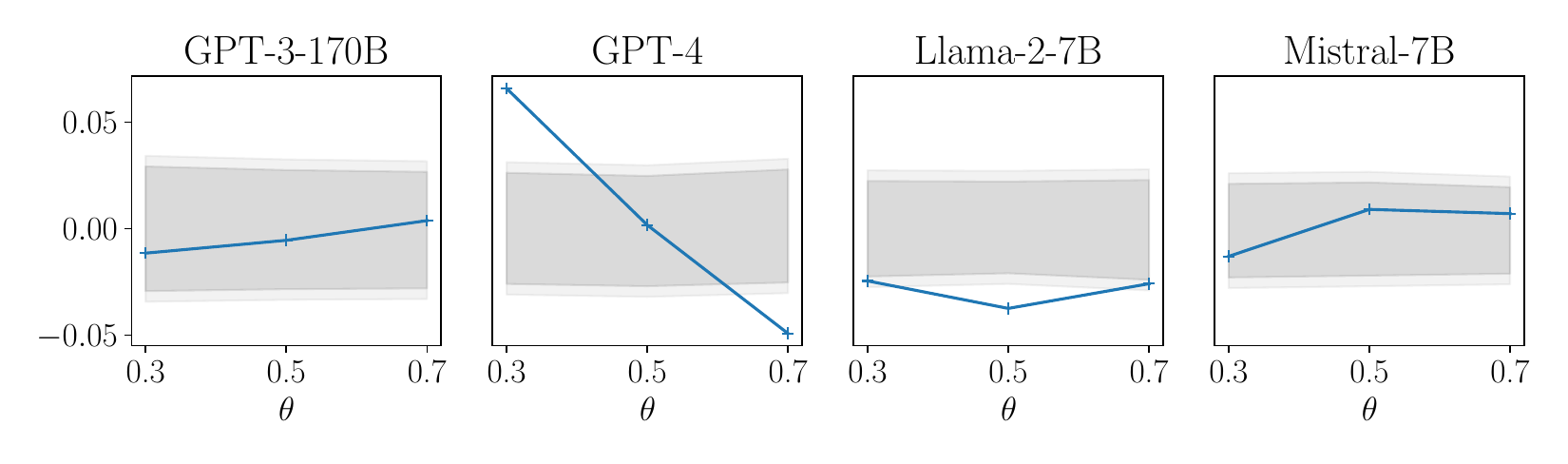}}
\subfigure[$T_{2,k}$ for $k\in\{2,3,4,5\}, n=20, m=40$]{\includegraphics[width=0.48\linewidth]{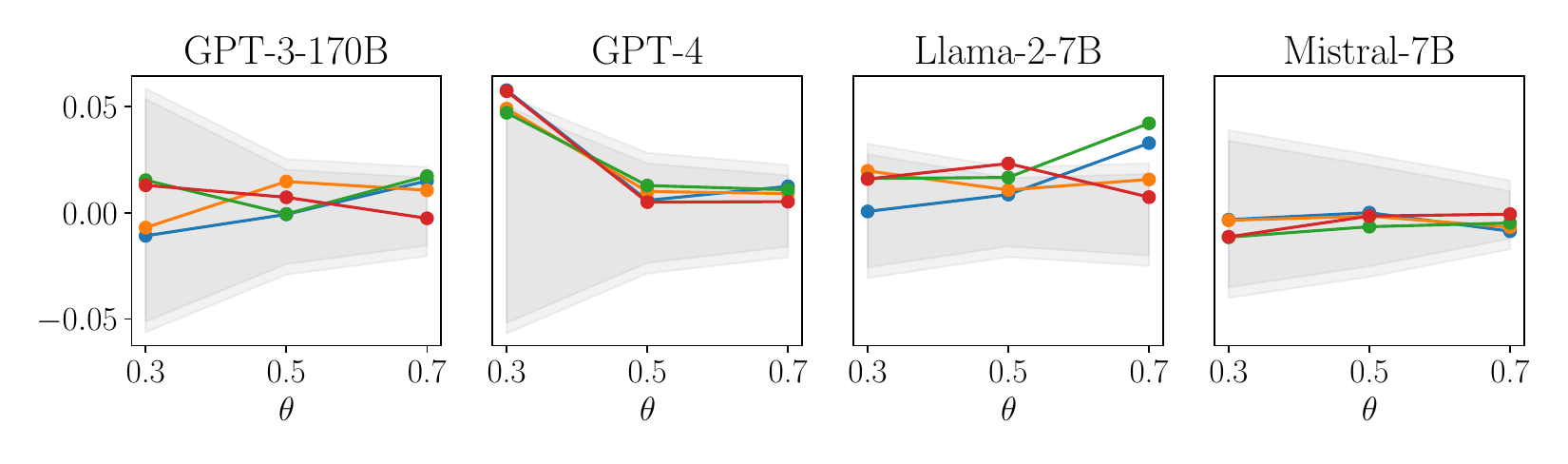}}
\subfigure[$T_{1,g}$ for $g(z)=z, n=100, m=200$]{\includegraphics[width=0.48\linewidth]{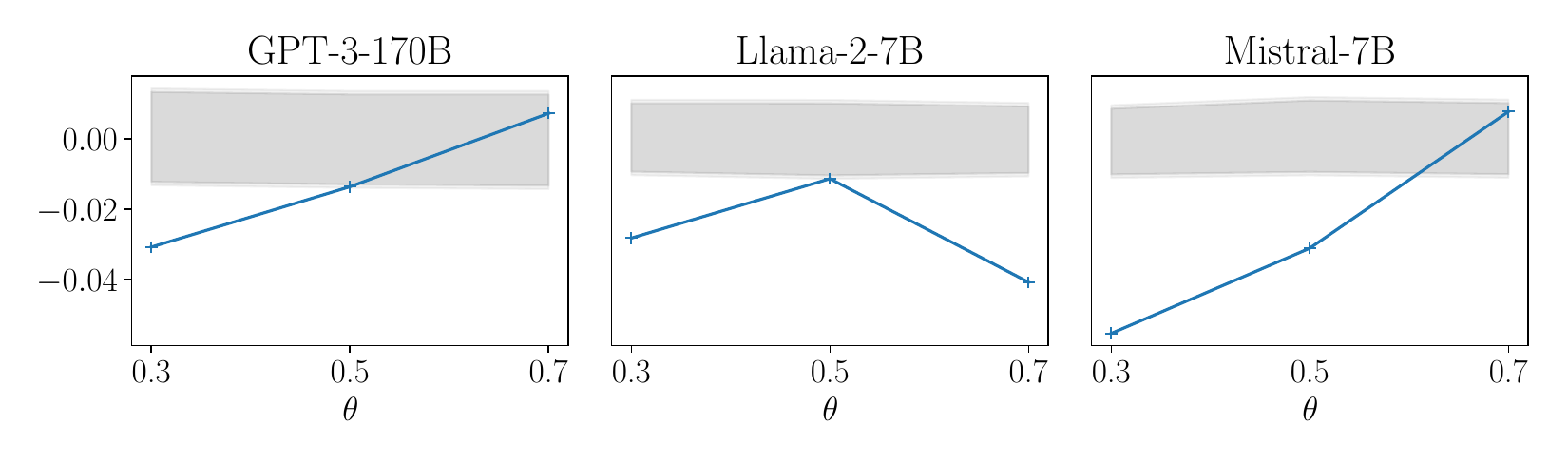}}
\subfigure[$T_{2,k}$ for $k\in\{2,3,4,5\}, n=100, m=200$]{\includegraphics[width=0.48\linewidth]{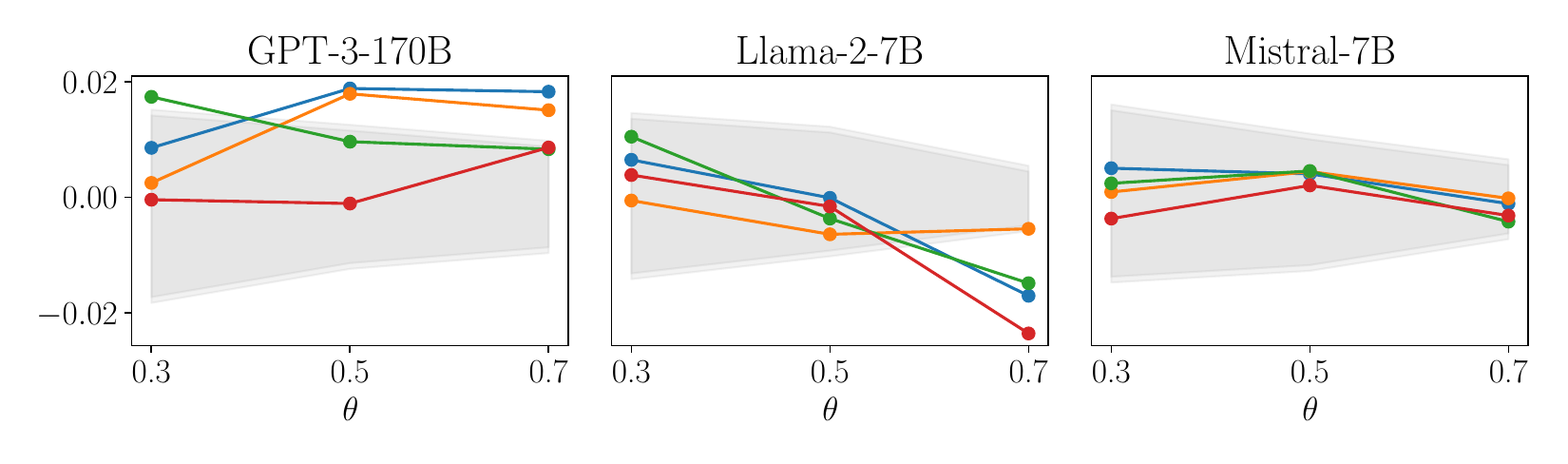}}
\caption{Checking the martingale property: results for the Bernoulli experiments for all choices of $(n,m)$ in the setting of Fig.~\ref{fig:cid-bern-main}. Note that we drop \texttt{gpt-4} for $n=100$ due to API limitations (as discussed in App.~\ref{app:additional experiment details}). 
}\label{fig:cid-bern-appendix}
\end{figure*}

\begin{figure*}[p]
\centering 
\subfigure[$T_{1,g}$ for $g(z)=z, n=20, m=10$]{\includegraphics[width=0.48\linewidth]{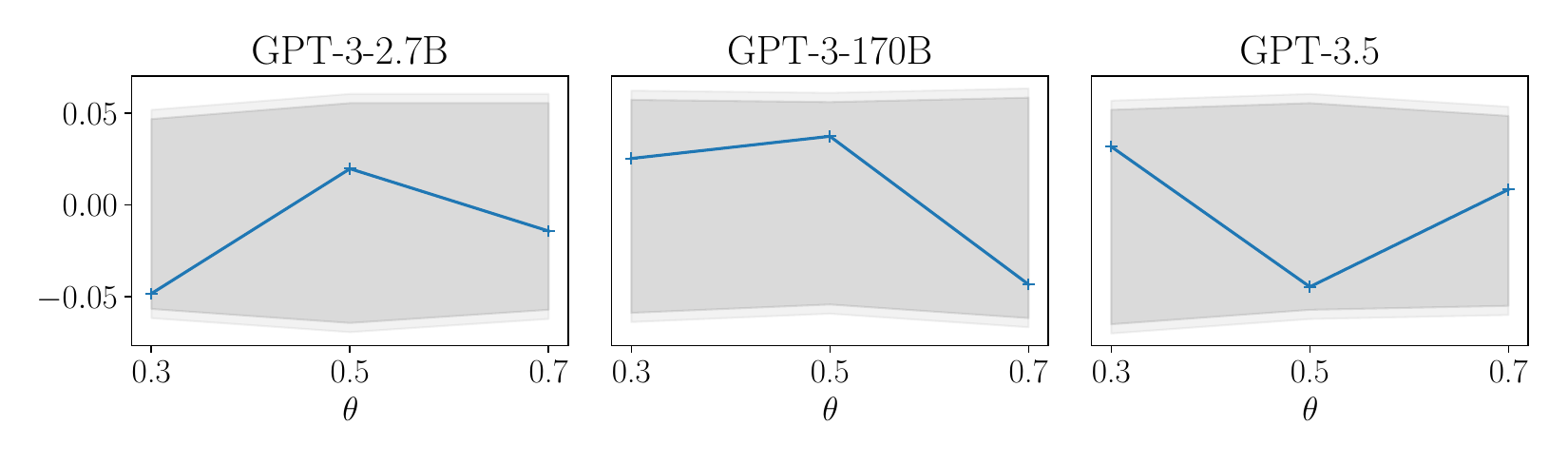}}
\subfigure[$T_{2,k}$ for $k\in\{2,3,4,5\}, n=20, m=10$]{\includegraphics[width=0.48\linewidth]{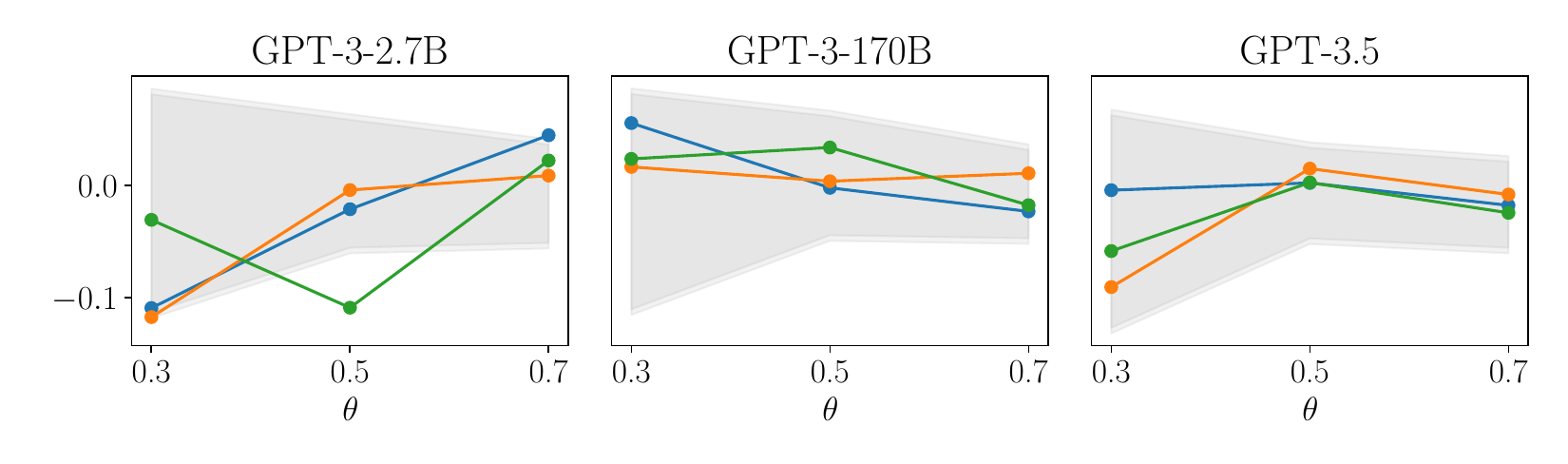}}
\subfigure[$T_{1,g}$ for $g(z)=z, n=20, m=10$]{\includegraphics[width=0.48\linewidth]{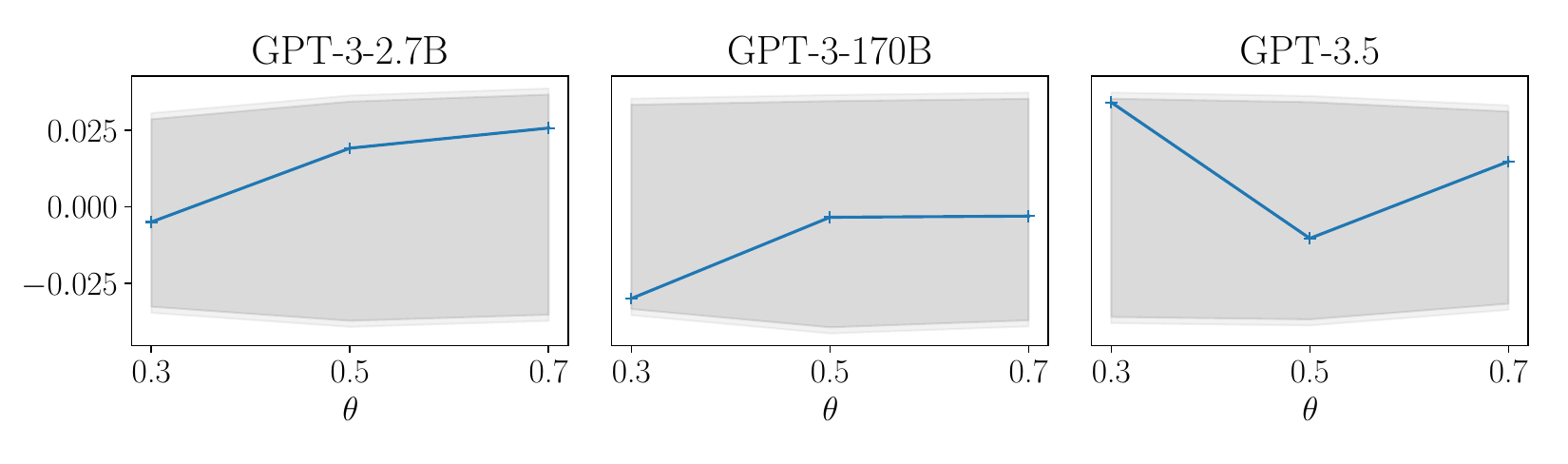}}
\subfigure[$T_{2,k}$ for $k\in\{2,3,4,5\}, n=20, m=10$]{\includegraphics[width=0.48\linewidth]{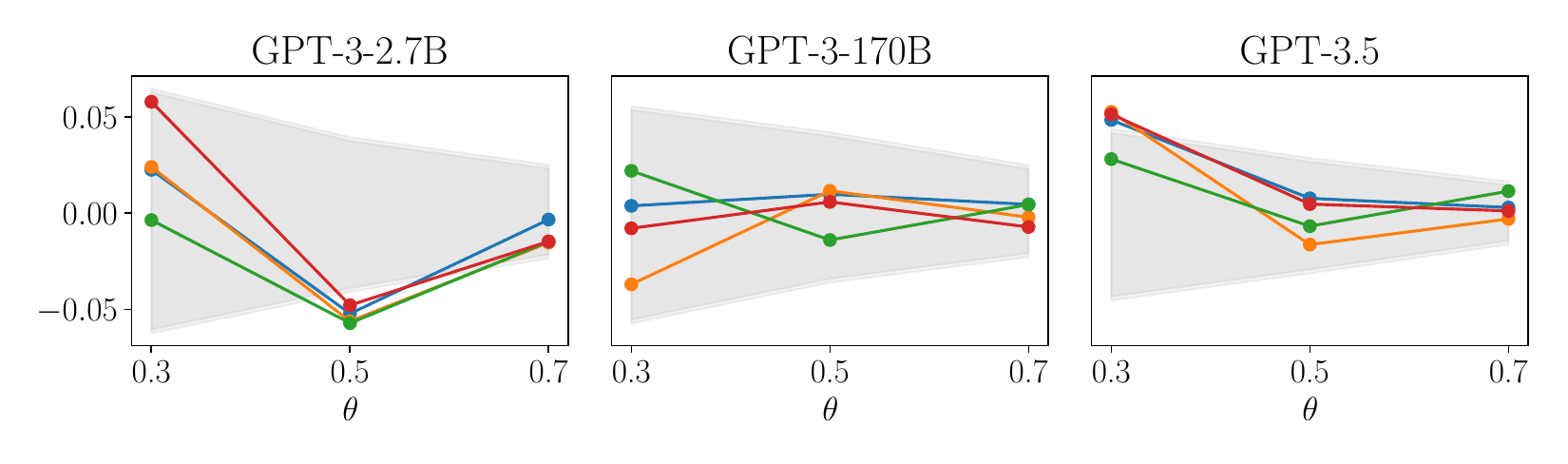}}
\subfigure[$T_{1,g}$ for $g(z)=z, n=100, m=50$]{\includegraphics[width=0.48\linewidth]{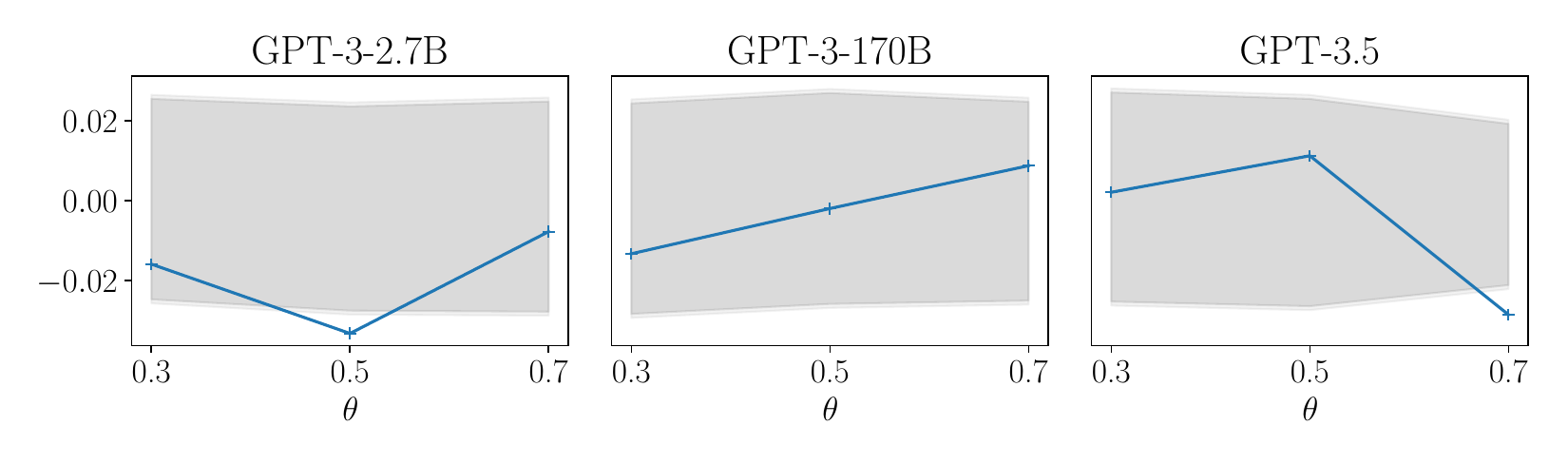}}
\subfigure[$T_{2,k}$ for $k\in\{2,3,4,5\}, n=100, m=50$]{\includegraphics[width=0.48\linewidth]{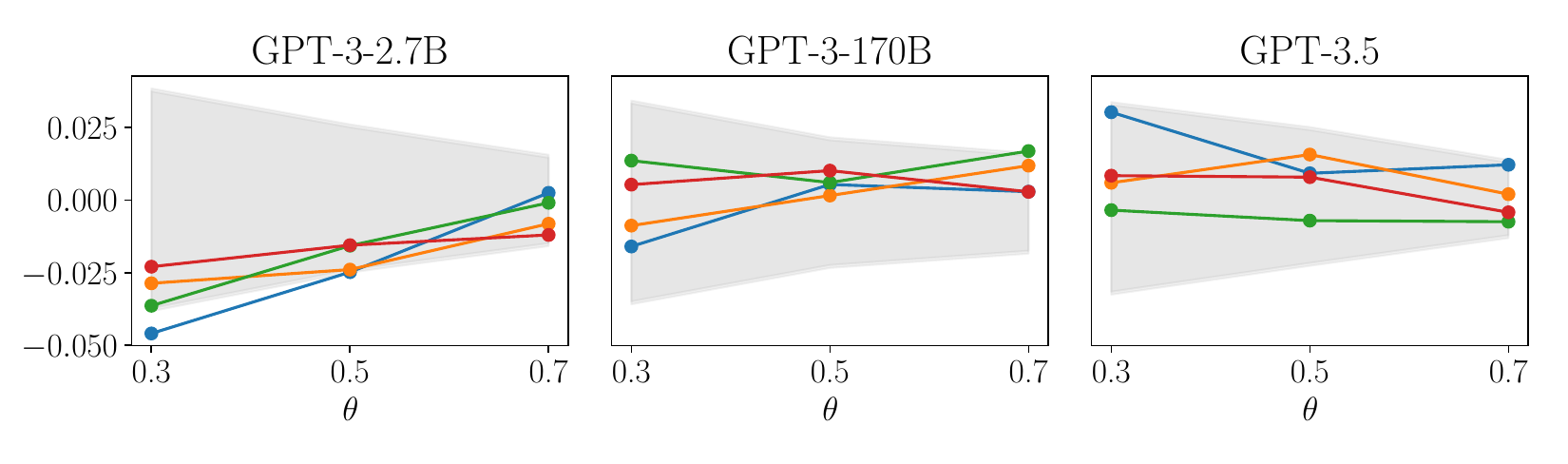}}
\subfigure[$T_{1,g}$ for $g(z)=z, n=20, m=40$]{\includegraphics[width=0.48\linewidth]{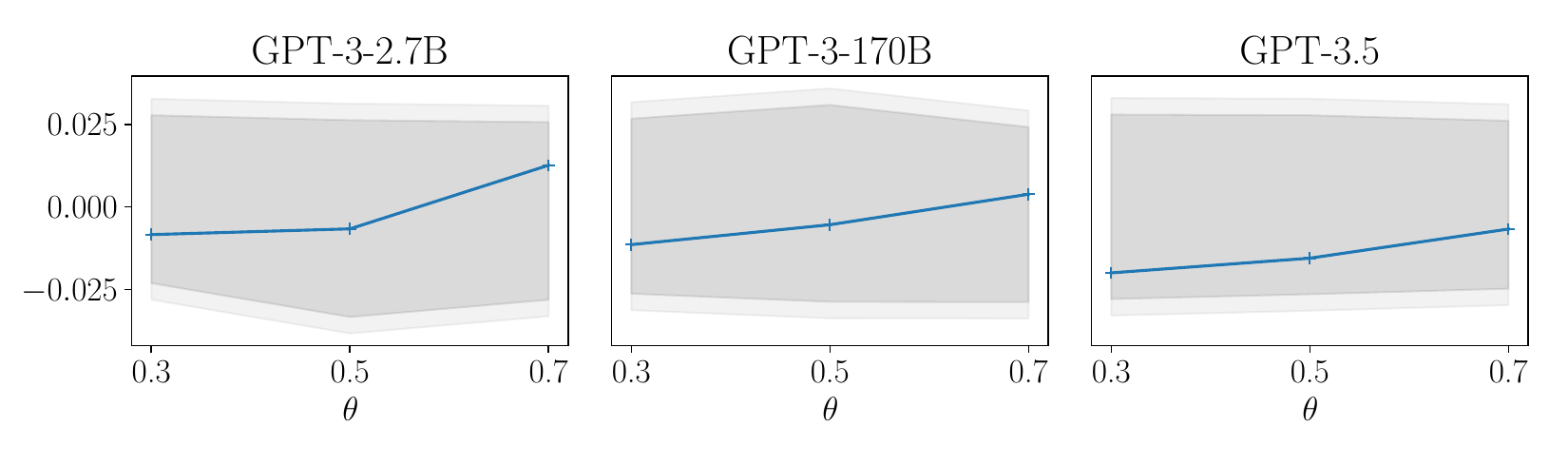}}
\subfigure[$T_{2,k}$ for $k\in\{2,3,4,5\}, n=20, m=40$]{\includegraphics[width=0.48\linewidth]{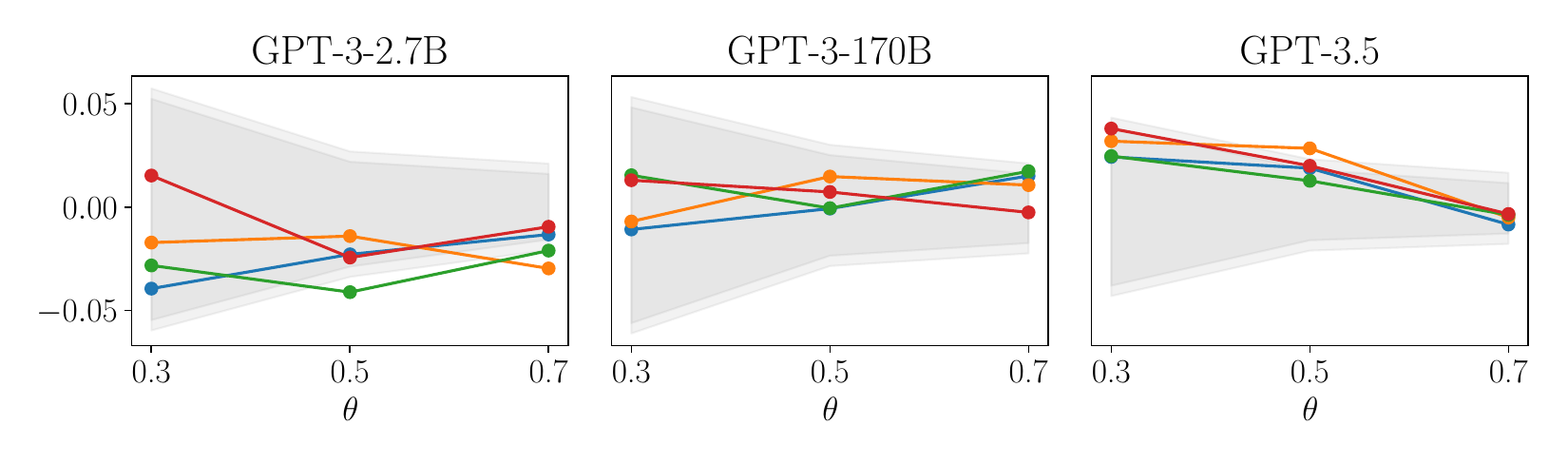}}
\subfigure[$T_{1,g}$ for $g(z)=z, n=50, m=100$]{\includegraphics[width=0.48\linewidth]{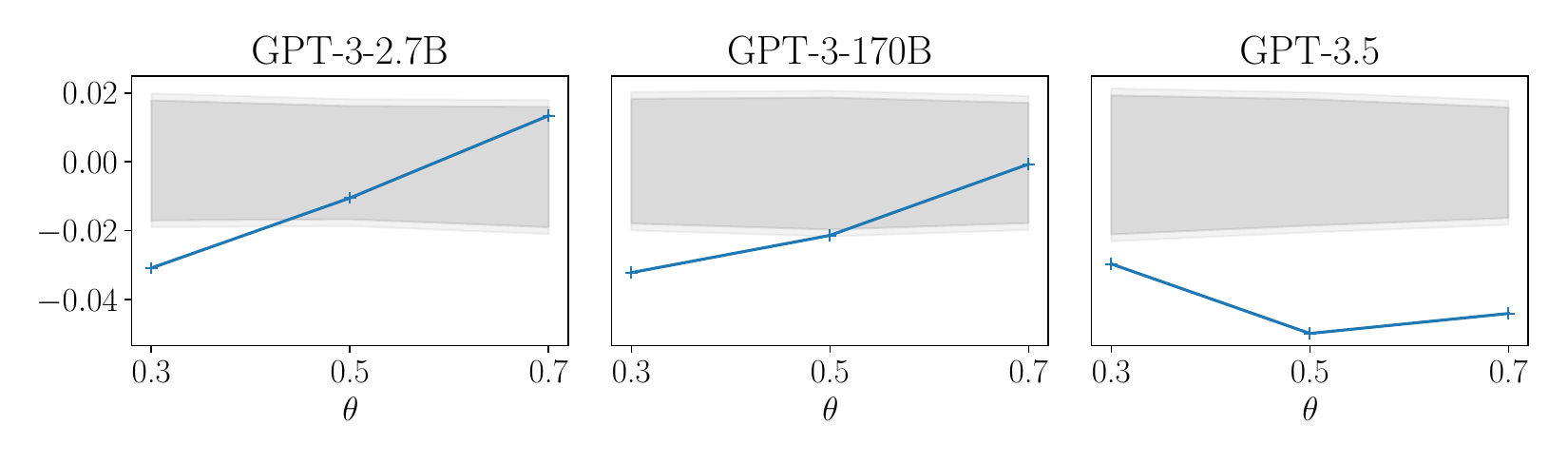}}
\subfigure[$T_{2,k}$ for $k\in\{2,3,4,5\}, n=50, m=100$]{\includegraphics[width=0.48\linewidth]{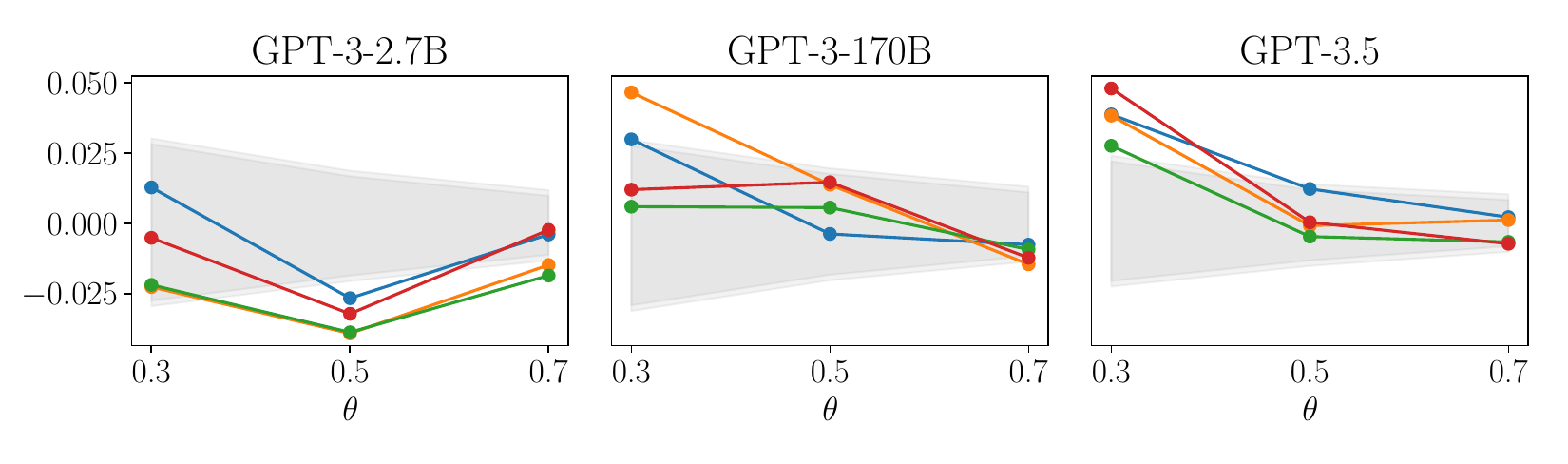}}
\subfigure[$T_{1,g}$ for $g(z)=z, n=100, m=200$]{\includegraphics[width=0.48\linewidth]{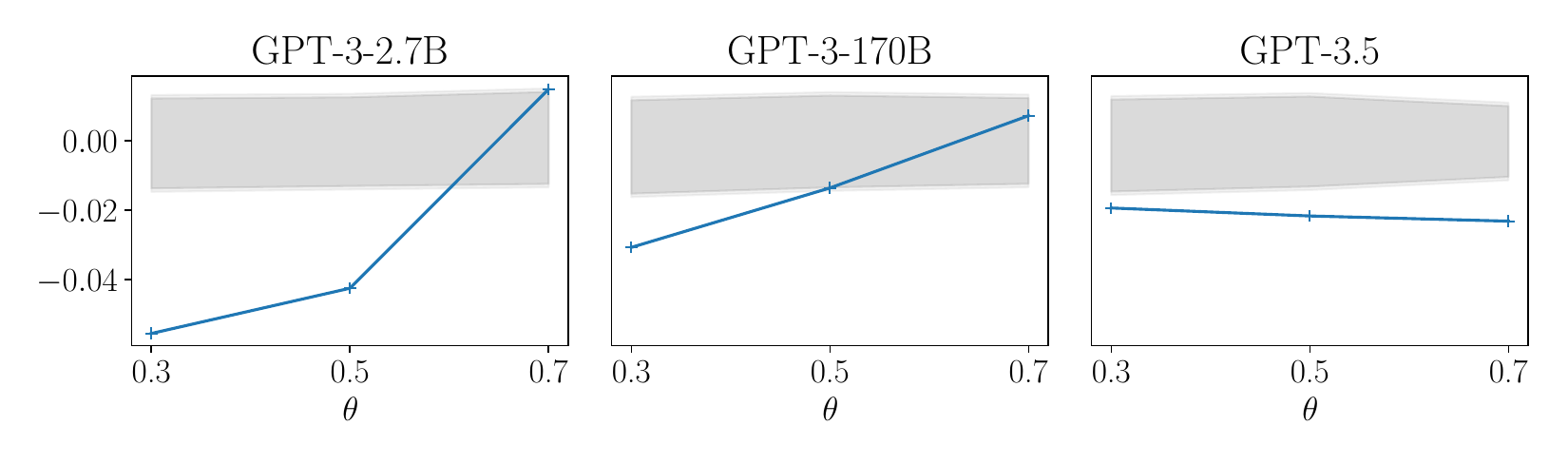}}
\subfigure[$T_{2,k}$ for $k\in\{2,3,4,5\}, n=100, m=200$]{\includegraphics[width=0.48\linewidth]{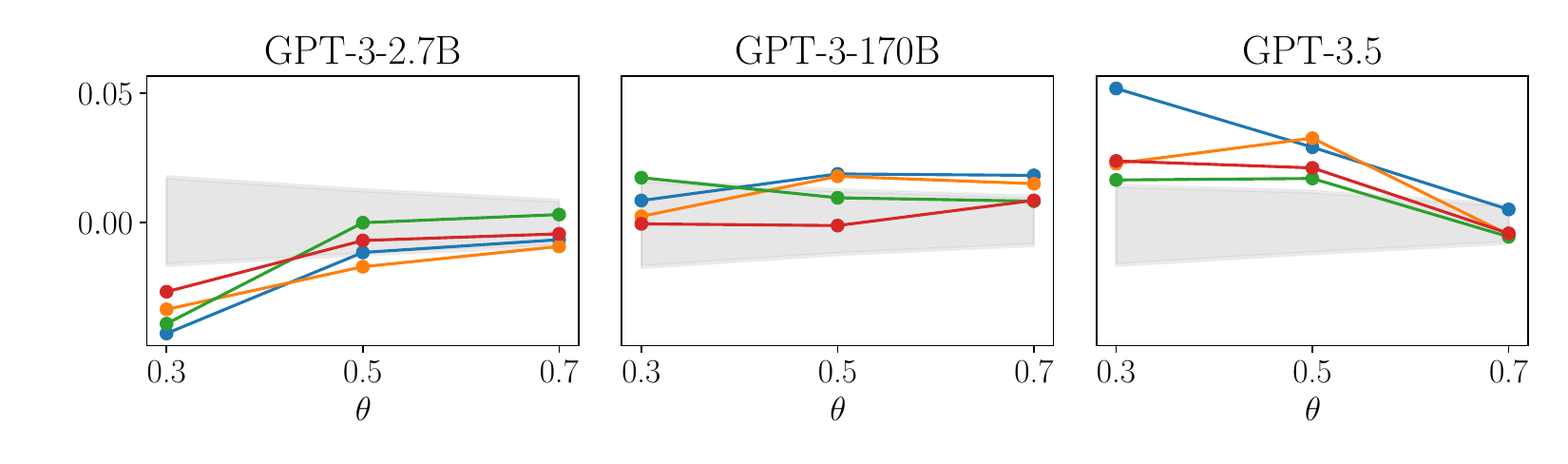}}
\caption{Checking the martingale property: results for \texttt{gpt-3-2.7b} and \texttt{gpt-3.5} %
in the setting of Fig.~\ref{fig:cid-bern-main}. %
}\label{fig:cid-bern-appendix-gpt}
\end{figure*}

\begin{figure*}[p]
\centering 
\subfigure[$n=20,m=10$]{\includegraphics[width=0.48\linewidth]{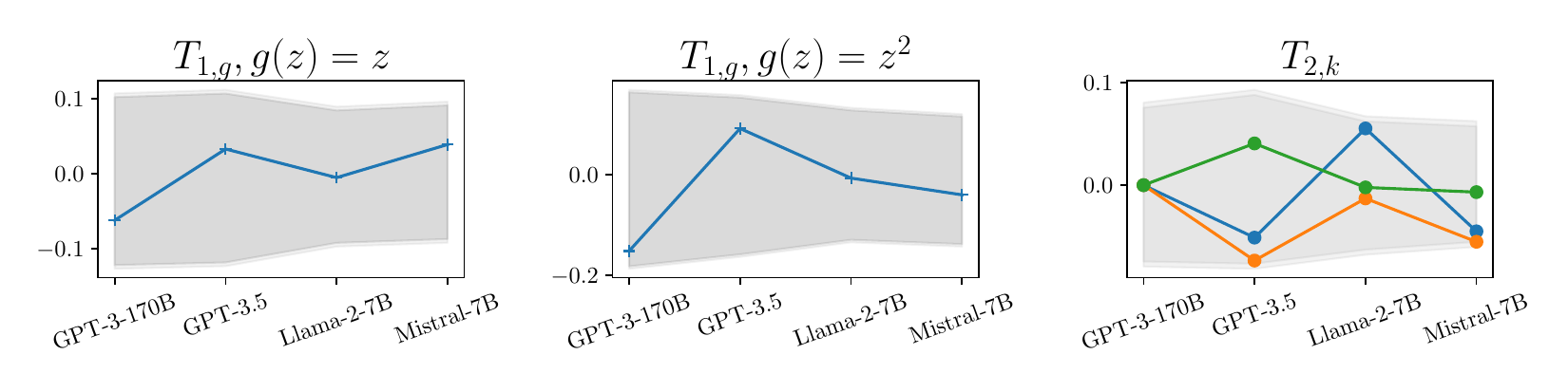}}
\subfigure[$n=20,m=40$]{\includegraphics[width=0.48\linewidth]{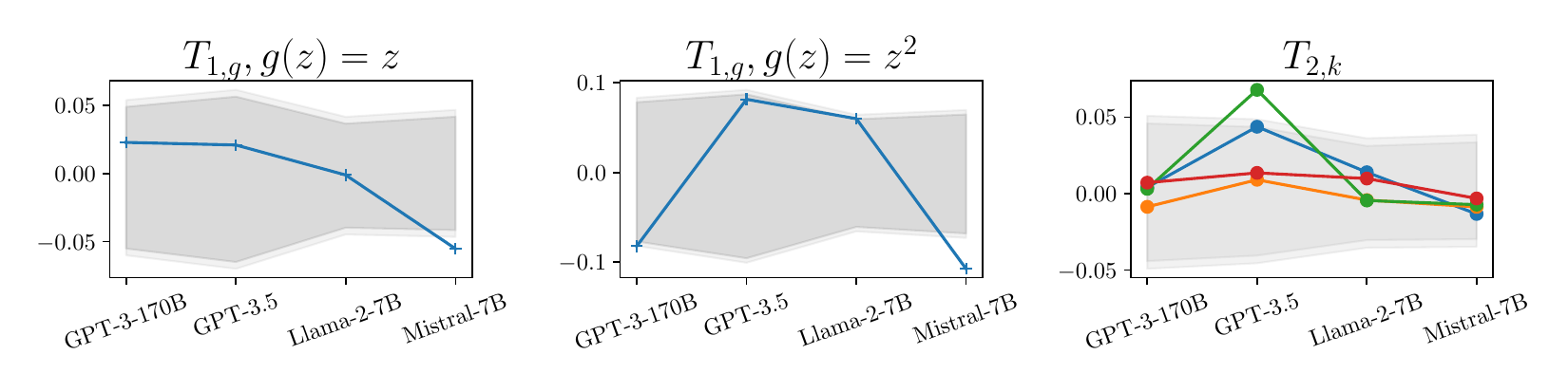}}
\subfigure[$n=50,m=25$]{\includegraphics[width=0.48\linewidth]{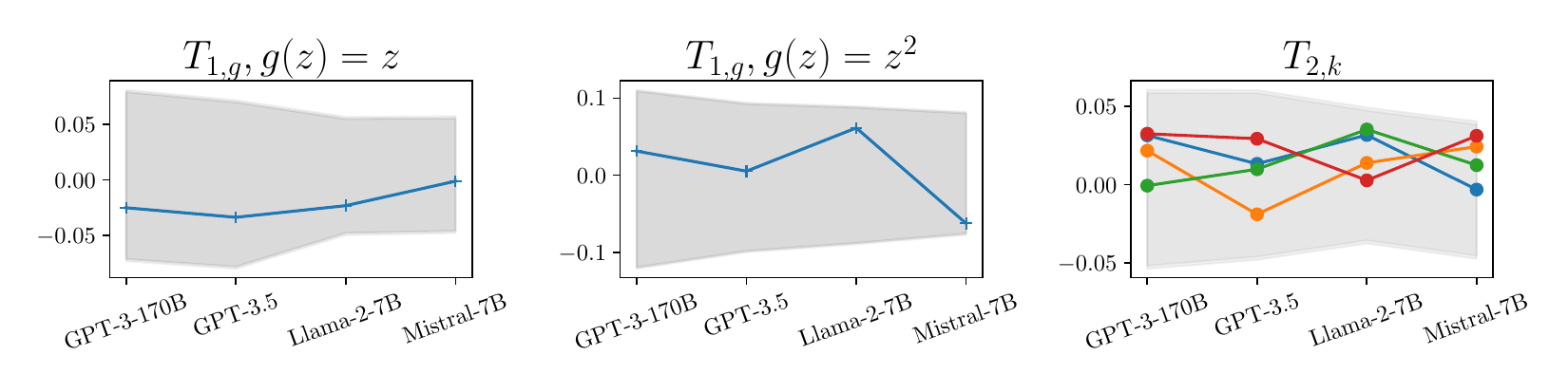}}
\subfigure[$n=50,m=100$]{\includegraphics[width=0.48\linewidth]{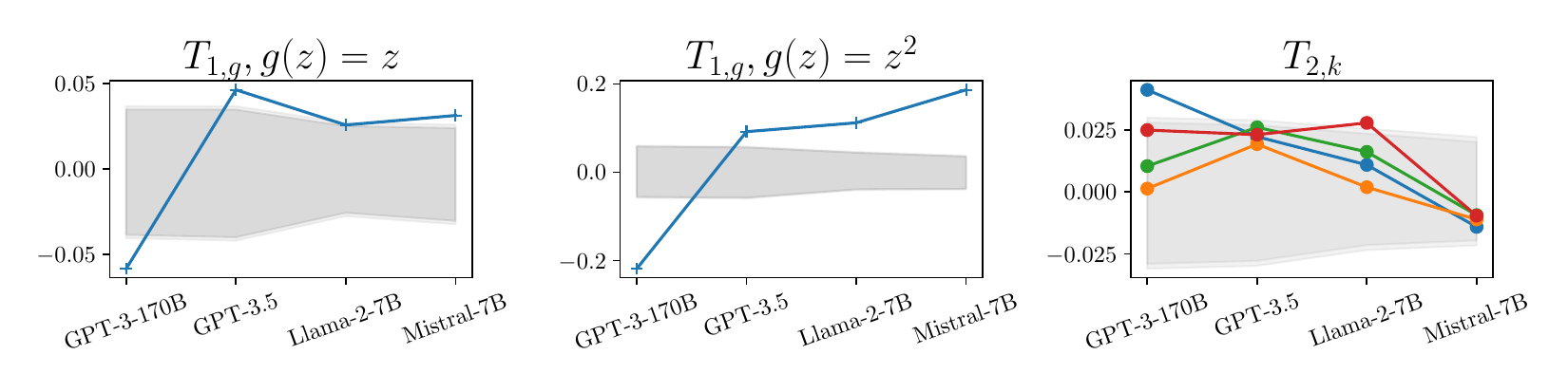}}
\caption{Checking the martingale property: results for the Gaussian experiments with $\theta=0$. See Fig.~\ref{fig:cid-gauss-main} for details.}\label{fig:cid-gauss-appendix-m0}
\end{figure*}

\begin{figure*}[t]
\centering 
\subfigure[$n=20,m=10$]{\includegraphics[width=0.48\linewidth]{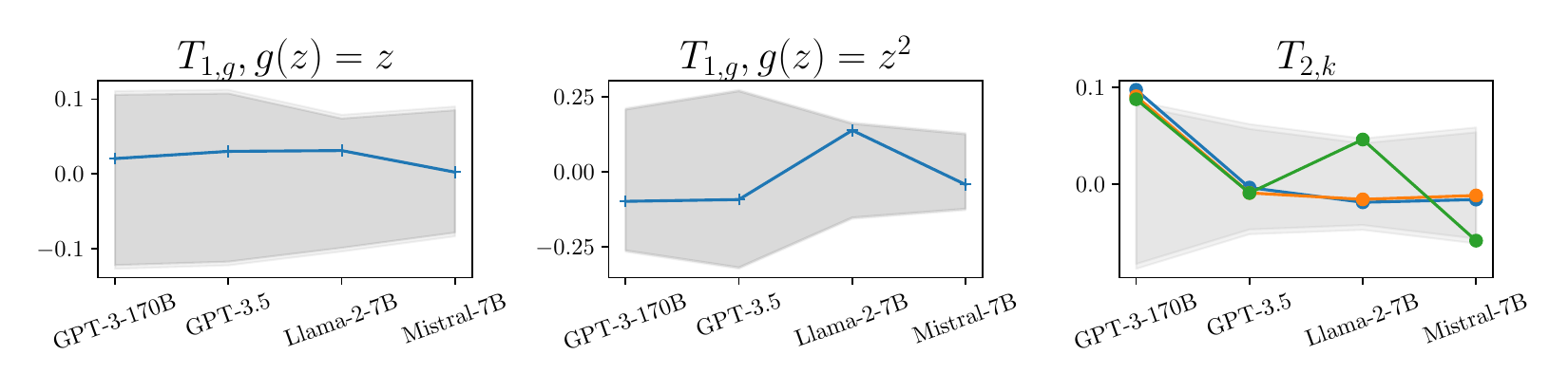}}
\subfigure[$n=20,m=40$]{\includegraphics[width=0.48\linewidth]{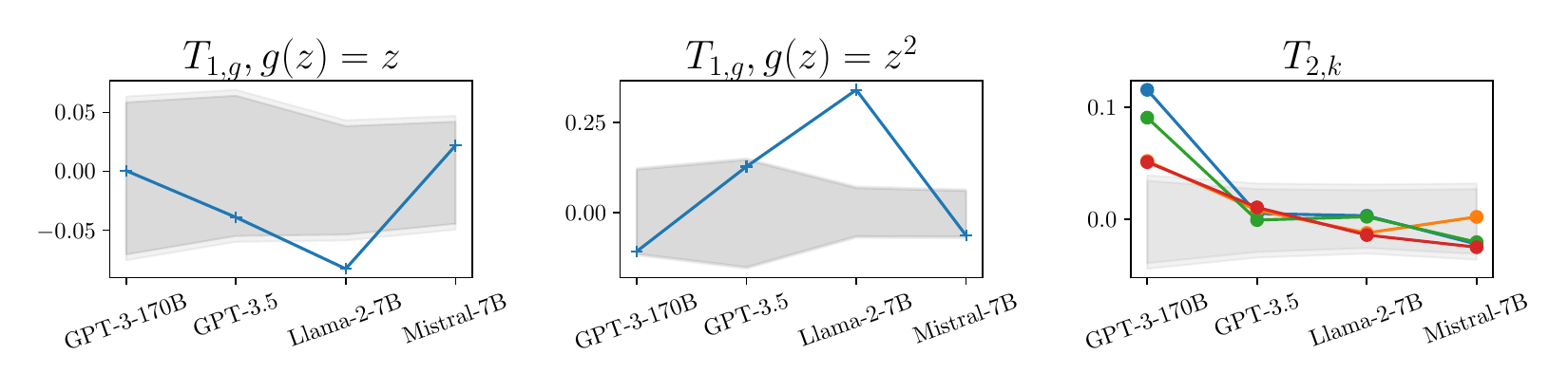}}
\subfigure[$n=50,m=25$]{\includegraphics[width=0.48\linewidth]{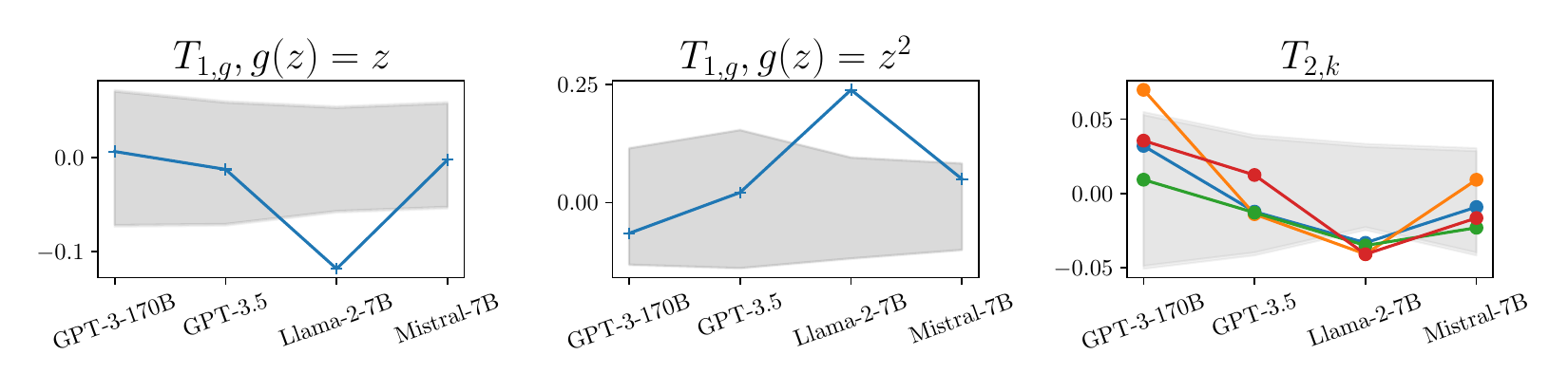}}
\subfigure[$n=50,m=100$]{\includegraphics[width=0.48\linewidth]{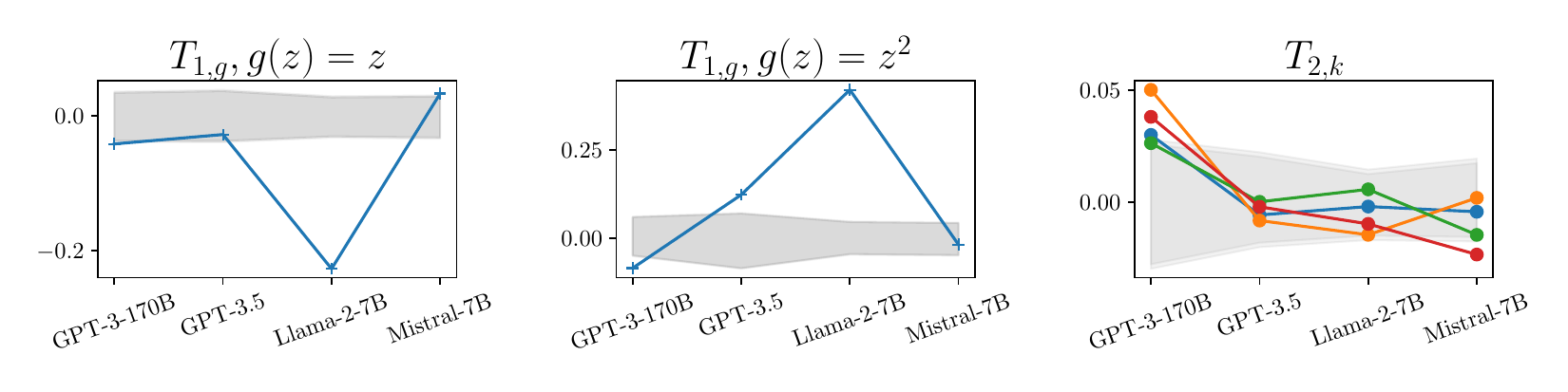}}
\subfigure[$n=100,m=50$]{\includegraphics[width=0.48\linewidth]{Figures/agg/gauss_all_-1_100_50_3d_35.pdf}}
\subfigure[$n=100,m=200$]{\includegraphics[width=0.48\linewidth]{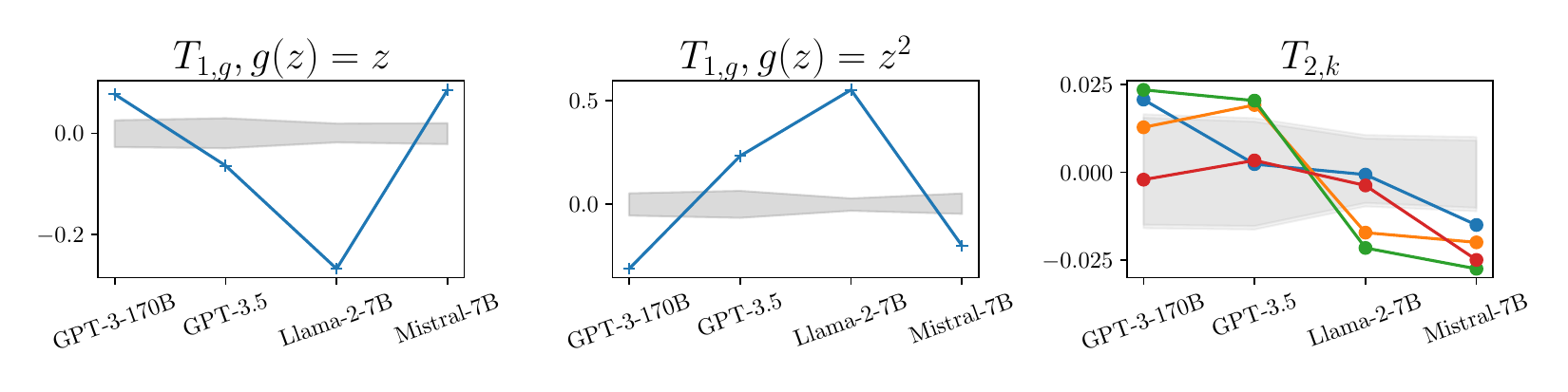}}
\caption{Checking the martingale property: results for the Gaussian experiments with $\theta=-1$. See Fig.~\ref{fig:cid-gauss-main} for details.}\label{fig:cid-gauss-appendix-m1}
\end{figure*}

\paragraph{Additional details for the natural language experiment.}
For the natural language experiment, we modify the %
scheme as follows: 
we split the ICL dataset and the imputations into two sequences $
(\{Y_{i_{0,k}}\}_{k=1}^{n_1+m_1},  
\{Y_{i_{1,k}}\}_{k=1}^{n_0+m_0})
$
based on the value of $X_i$. Subsequently,  %
either sequence contains i.i.d.~Bernoulli random variables with a different mean, and any Bayesian ICL model with a correctly specified likelihood must produce imputations following a separate Bayesian posterior for Bernoulli data. 
Thus, we can apply our Bernoulli diagnostics separately to both sequence. 
This modification allows us to focus on LLMs' conditional %
predictive distributions of the form  $p_M(Y_{i+1}\vert X_{i+1}, Z_{1:i})$, which is more relevant in practice. 

\subsection{Further Experimental Results and Discussion}\label{app:further-experimental-results}

\paragraph{Full results: Bernoulli experiments.}
Figs.~\ref{fig:cid-bern-appendix} and \ref{fig:cid-bern-appendix-gpt} report the full results for the Bernoulli experiment in the setting of Fig.~\ref{fig:cid-bern-main} ($m\in\{n/2,2n\})$, where we also visualise the $o(1/n)$ `acceptable deviation' (\S\ref{app:acceptable-dev}) using a light shade with width $0.1/n$. 
Consistent with the results in Fig.~\ref{fig:cid-bern-main}, 
for all models except the least capable \texttt{gpt-3-2.7b}, 
the martingale property is generally satisfied in the short-horizon scheme ($m=n/2$), but increasingly violated as we move to $m=2n$. 

The results for \texttt{gpt-3-2.7b} provide a sanity check of our experiment setup: Its unsatisfactory performance shows that our tasks require nontrivial ICL capabilities which are known to be absent in \texttt{gpt-3-2.7b} \citep{brown2020language}. As another sanity check 
we provide the results for $m=10n$ in Fig.~\ref{fig:bern-hf-long}, where we drop all GPT models due to limitations with its API. As we can see, in this setting where the sampling horizon becomes even longer, deviation from the martingale property also becomes more severe. The consistently large negative value of $T_{1,g}$ indicates a continual upward bias towards $1$, which demonstrates the `creation of new knowledge' phenomenon discussed in \S\ref{sec:martingale}.

\paragraph{Full results: Gaussian experiments.}
We report additional results for the Gaussian experiment in Fig.~\ref{fig:cid-gauss-appendix-m0} ($\theta=0$) and Fig.~\ref{fig:cid-gauss-appendix-m1} ($\theta=-1$). As we can see, all models generally demonstrate a deviation from the martingale property when $\theta=-1$, but with $\theta=0$ they may often appear to satisfy the property within a shorter horizon ($m=n/2$). 
Results for $\theta=1$ are similar to the $\theta=-1$ case and thus omitted. 
We note that in many cases the predictive distribution cannot be matched to any Bayesian posterior with the correct likelihood: for the latter the sample variance should be greater than $1$, the likelihood variance, but this is often not true for the LLMs. For example, for \texttt{gpt-3.5} in the setting of Fig.~\ref{fig:cid-gauss-main} we find the sample variance to be $0.711<1$ (95\% CI: $[0.680, 0.742]$).

\begin{figure}[htb]
    \centering
    \subfigure[$T_{1,g},g(z)=z$]{
    \includegraphics[width=0.65\linewidth]{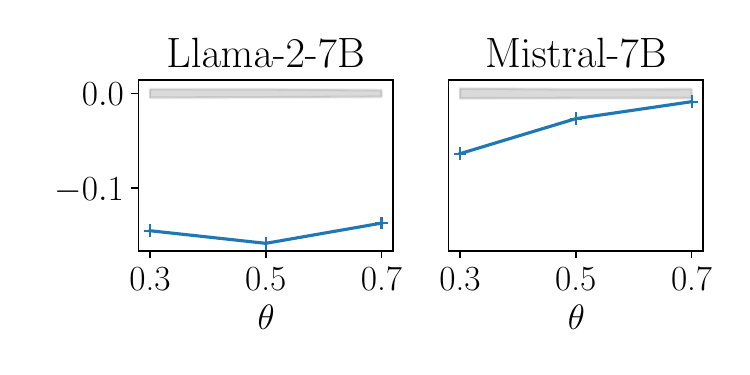}}
    \subfigure[$T_{2,k},k\in\{2,3,4,5\}$]{
    \includegraphics[width=0.65\linewidth]{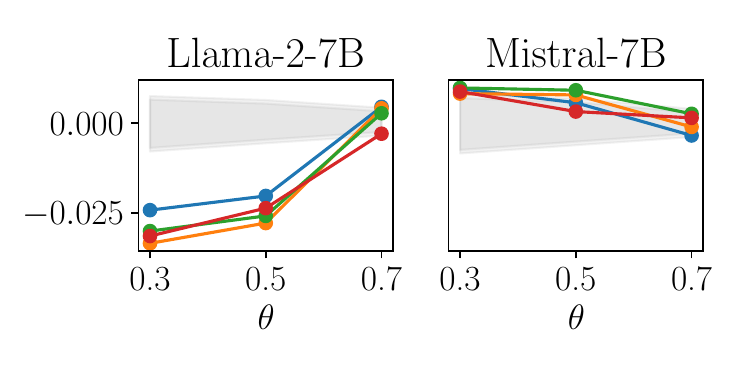}}
    \caption{
    Checking the martingale property on Bernoulli experiments: additional result with $n=100, m=10n$. See Fig.~\ref{fig:cid-bern-main} for details.
    }
    \label{fig:bern-hf-long}
\end{figure}

\begin{figure}[htb]
    \centering
    \includegraphics[width=0.98\linewidth]{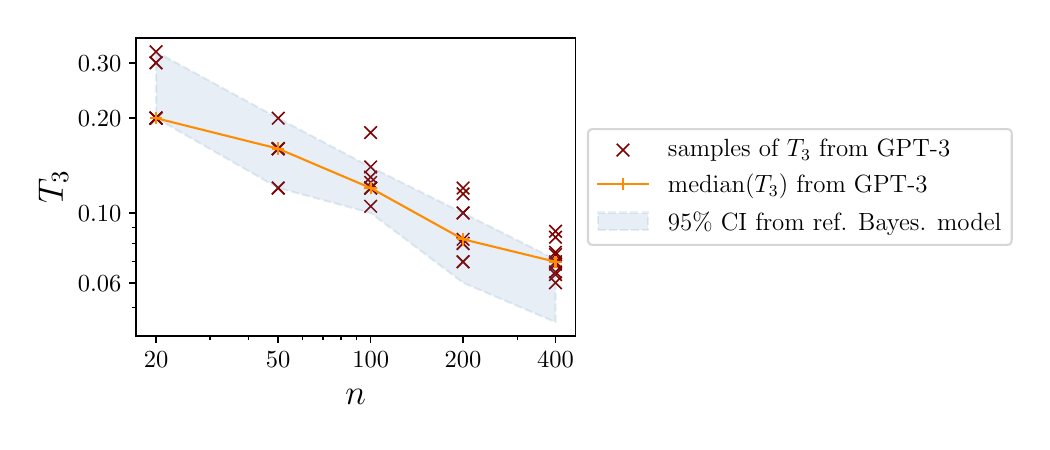}
    \caption{Scaling of epistemic uncertainty: samples of the test statistic $T_3$ evaluated on \texttt{gpt-3-170b}, compared with the 95\% CI from the reference Bayesian model. 
    }
    \label{fig:scaling-eps-gpt3-emp}
\end{figure}

\paragraph{Additional results: Scaling of epistemic uncertainty.} 
To avoid clutter, in 
Fig.~\ref{fig:scaling-epis} we have plotted the sample median of the test statistic $T_3$ from various models, and in that aspect \texttt{gpt-3-170b} appears to be close to the reference Bayesian model when $n$ is smaller. 
Here we note that a deviation becomes more evident  
if we compare the individual samples of $T_3$  (obtained from independent runs) against bootstrap CIs from the reference Bayesian model, as shown in Fig.~\ref{fig:scaling-eps-gpt3-emp}.  
\begin{figure}[htb]
    \centering
    \subfigure[\texttt{gpt-3-2.7b}, NLP, pretrained]{\includegraphics[width=0.92\linewidth]{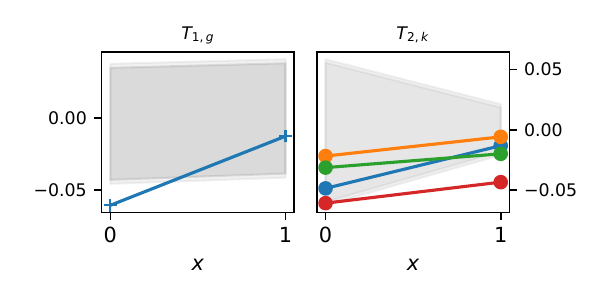}} 
    \subfigure[\texttt{gpt-3-2.7b}, NLP, finetuned]{\includegraphics[width=0.92\linewidth]{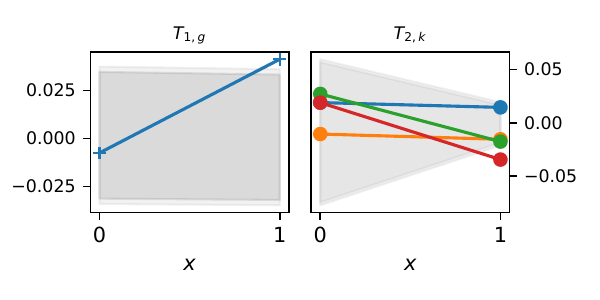}}
    \subfigure[\texttt{gpt-3-2.7b}, Bernoulli ($n=50,m=100$), pretrained]{
    \includegraphics[width=0.92\linewidth]{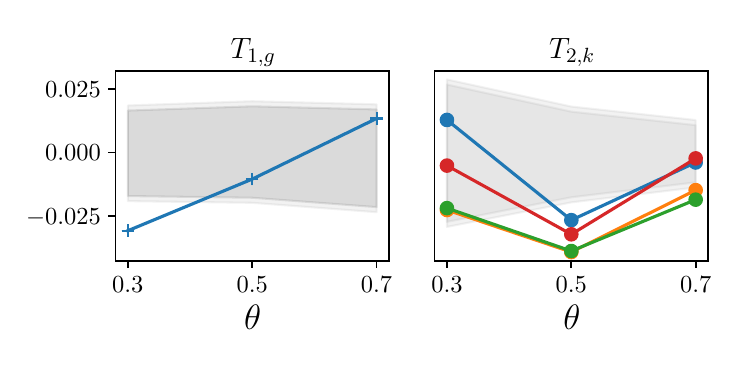}}
    \subfigure[\texttt{gpt-3-2.7b}, Bernoulli ($n=50,m=100$), finetuned]{
    \includegraphics[width=0.92\linewidth]{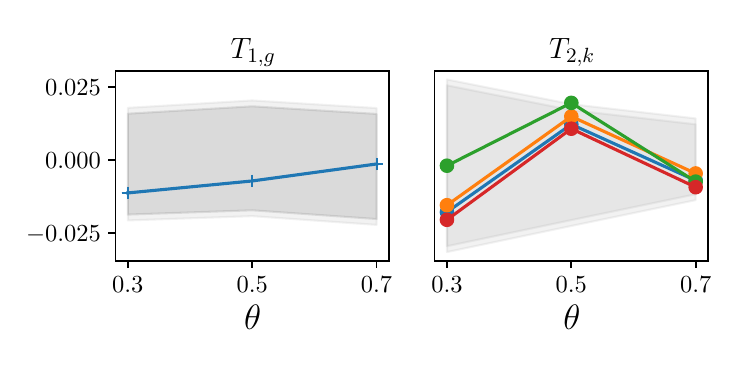}}
    \caption{Checking the martingale property: comparison of \texttt{gpt-3} models 
    before and after fine-tuning on the NLP (Fig.~\ref{fig:icl-both}) and Bernoulli (Fig.~\ref{fig:cid-bern-main}) datasets.} 
    \label{fig:fine-tune}
\end{figure}

\paragraph{Additional results with fine-tuned models.} 
Some previous works \citep{zhang2023and,jiang2023latent} studied ICL under the assumption that the LLM has been perfectly pretrained on the ICL test distributions. 
While such assumptions are somewhat unrealistic, it may still be interesting to investigate whether finetuning on datasets that are similar to the ICL test distribution could lead to a closer-to-Bayesian behaviour for ICL. 
To this end we finetune \texttt{gpt-3-2.7b} models on Bernoulli and synthetic NLP datasets with randomly sampled parameters, and repeat the checks in \S\ref{sec:CID-check} on the finetuned models.\footnote{
We use OpenAI's finetuning service which determines optimisation hyperparameters by validation loss. 
For the Bernoulli dataset, we sampled $10^4$ sequence for training, each with an expected length of $75$; the true parameter $\theta$ is sampled from the uniform distribution on $[0,1]$. 
For the NLP dataset, we sampled $5000$ sequences for training, each with an expected length of $85$; the two Bernoulli parameters that determine the prompt distribution are sampled from a $\mathrm{Beta}(0.5, 0.5)$ distribution.}
The results are visualised in Fig.~\ref{fig:fine-tune}. We can see that 
the finetuned models may indeed demonstrate a better adherence to the martingale property, but they do not always pass the checks. 

\paragraph{Comparison of LLMs with and without instruction tuning.} 
Among all LLMs evaluated, \texttt{gpt-3.5} and \texttt{gpt-4} generally demonstrate the worst performance in our evaluations, and incidentally they are the only LLMs that have undergone instruction tuning. 
The comparison between \texttt{gpt-3-170b} and \texttt{gpt-3.5} (see Fig.~\ref{fig:scaling-epis} and Fig.~\ref{fig:cid-bern-appendix-gpt}) is particularly interesting since the two models are generally similar, with a main difference being the presence of instruction tuning. 
These observations seem to suggest that instruction tuning may have exacerbated the %
non-Bayesian ICL behaviour. %
Such an explanation would be broadly consistent with the previous findings 
that instruction tuning generally causes the LLM to produce less calibrated uncertainty estimates \citep{openai2023gpt4,gruver2023large,kalai2023calibrated}.

\paragraph{Could the LLMs correspond to `unreasonable Bayesian models'?} 
As discussed in the main text, our findings suggest that the behaviours of \texttt{gpt-3.5} and \texttt{llama-2-7b} are highly unlikely to correspond to any `reasonable' Bayesian models in the Bernstein von-Mises sense. 
Generally speaking, to conduct any statistical test with a reasonable level of power it is necessary to impose some regularity restrictions on the null hypothesis to be tested. 
Moreover, 
it could be similarly concerning 
if the LLMs correspond to any `unreasonable' Bayesian model that does not satisfy the regularity conditions in the Bernstein von-Mises theorem. 
Nonetheless, in the setting above we can also provide some informal discussion on why the LLMs are unlikely 
to be `unreasonable' Bayesian models (e.g., one with an approximately degenerate prior), by comparing the results across different choices of $n$. 
Specifically, 
for \texttt{gpt-3.5}, 
its small-sample behaviour in Fig.~\ref{fig:scaling-epis} can only be explained as a Bayesian model with a very strong prior that has the bulk of its mass near the true parameter; yet this would contradict its larger-than-regular posterior spread when $n$ is large. 
For \texttt{llama-2-7b}, its large-sample behaviour could only be explained with the exact opposite (e.g., a $\mathrm{Beta}(100, 100)$ prior); yet that should have led to a much larger IQR when $n$ is small.

\section{Related Work}
\label{sec:Related work}

\paragraph{In-context learning as Bayesian inference. }
Numerous papers have explained ICL as performing some form of Bayesian Inference. 
The hypothesis is likewise studied in %
\citet{jiang2023latent,wang2023large} and the concurrent work of \citet{ye2024pretraining}. 
It is also covered by 
\citet{zhang2023and} if we restrict to exchangeable demonstrations. %
Closely related are the works of \citet{akyurek2022learning,panwar2024incontext} which demonstrate that high-capacity transformers pretrained with square loss may recover the Bayes predictor. %

\citet{xie2021explanation} studied ICL in a setting where the the LLM is perfectly trained on a pretraining distribution defined by a 
Hidden Markov Model (HMM). 
Under this and further assumptions, they prove that the LLM must implicitly perform Bayesian inference to infer
a latent concept of the prompt. %
Strictly speaking, their assumptions do not exactly match our hypothesis, because their Bayesian model %
employs a likelihood that is misspecified for ICL: it does not assume $\{Z_i\}$ is conditionally i.i.d.~or exchangeable. 
However, their additional assumptions render the ICL behaviour similar to that of another Bayesian model that assumes conditionally i.i.d.~observations: 
when considering Eq.~(8-10) in their work, which imply that the log likelihood of their Bayesian model is well approximated by a Bayesian model assuming conditional i.i.d.~observations. %
In this regard their analysis is connected to our hypothesis, as it applies to the Bayesian model we study. 
It is important to note that their assumptions 
have been crucial in their proof for sample efficiency. 
More broadly, for any ICL predictor to be sample efficient on exchangeable $\{Z_i\}$, it is perhaps reasonable to expect the predictor to (approximately) recognise the exchangeable nature of $\{Z_i\}$, where our hypothesis would apply.

We review the other works in brief in the following.
\citet{bayes_blog} dicussed high-level connections between \citet{xie2021explanation} and various notions of exchangeability.
\citet{hahn2023theory}[Section 1.4] relates to \citep{xie2021explanation} as it can similarly be understood in terms of Bayesian inference, with the difference that they view the training tasks to be open-ended and compositional, in contrast to the finite nature of an HMM.
\citet{wang2023large} likewise takes a Bayesian viewpoint, which they utilise to select the ICL dataset optimally.
\citet{jiang2023latent} explains various phenomena of the `emergent abilities' of LLMs, such as in-context learning and chain-of-thought prompting, through Bayesian inference on the common distribution underlying natural languages.
\citet{zhang2023and} show that ICL implicitly uses a Bayesian model averaging.
\citet{griffiths2006optimal} recover the prior distributions in LLMs for everyday observations, such as the time of movies.

\paragraph{Theories for in-context learning. }
Numerous theoretical models and frameworks beyond Bayesian inference exist which aim at understanding and formalising ICL.
We refer to \cite{dong2022survey} for a detailed survey on in-context learning.
\citet{akyurek2022learning} prove that transformer-based architectures can implement classical learning algorithms such as linear models and ridge regression.
\citet{bai2023transformers} extend this work by demonstrating that ICL via transformers can implement and even braoder set of algorithms, including convex risk minimisation algorithms and gradient descent, where the model intrinsically selects a different learning algorithm based on the task at hand.
\citet{singh2023transient} shows that the ability of performing ICL algorithms such as Bayesian inference may be a transient phenomenon which produces highest accuracy during certain stages of pretraining an LLM.
\citet{raventos2023pretraining} show that the ability of in-context learning to tasks unseen during training by picking the right learning algorithm depends on the task diversity during training.

\paragraph{Input order dependence of Large language models. }
Previous work has found a dependence of LLMs on the order in which an input sequence is presented. 
\citet{lu2021fantastically} demonstrate that input order can significantly change the performance of an LLM in text classification tasks from ``state-of-the-art'' to ``random guess''.
In the context of few-shot learning, \citet{zhao2021calibrate} show the prediction of an LLM can depend on many seemingly irrelevant items, such as the prompt format or the order in which input examples are presented in a prompt, again with a sensititivity of performance to these factors. 
\citet{zhang2023deep} note that the topic structure of a document may be exchangeable, which motivates them to use Bayesian models, namely Latent Dirichlet Allocation, to analyse the representations of an LLM.
Our discussion on exchangeability relates to this line of work, but has a novel perspective on it through our focus on the martingale property, a necessary condition for exchangeability, among other implications of the martingale property which we study (e.g. the decomposition of uncertainty and the resulting identification of epistemic uncertainty).
Furthermore, in contrast to the related work, which shuffles the input data $Z_{1:n}$, we analyse the effect of shuffling the imputed, generated sequence $Z_{n+1}, \ldots$, where we find non-exchangeable behaviour which deviates from any reasonable Bayesian model.

\paragraph{Miscellaneous. }
Our work also relates a number of applications of LLMs.
As we are generating samples from an LLM with ICL, which as we demonstrate deviate from the distribution of the ICL dataset, this work relates to and has implications for a line of work on LLMs for synthetic data generation \cite{borisov2022language,hamalainen2023evaluating,tang2023does,veselovsky2023generating,li2023synthetic}.
Furthermore, we show that the martingale property is violated for long sampling paths, which may have implications for time series prediction with LLMs \cite{gruver2023large,jin2023time}, particularly over long horizons.
We also demonstrate a dependence on the order in which missing values are imputed, which has direct implications for the machine learning task of missing value imputations with LLMs \cite{mei2021capturing}.
\citet{shumailov2023model} demonstrate that models (including LLMs) which are recursively trained on data which they have previously generated shift in their distribution, where long tails disappear. 
While this work `conditions' on synthetic data by retraining, our work analyses the conditioning via ICL.
Lastly, as LLMs violate the martingale property in certain empirical regimes, they hence do not allow for a decomposed interpretation of their predictive uncertainty, which has important implications for uncertainty quantification with LLMs \cite{xiao2022uncertainty}.

\section{Negative Societal Impact}
\label{app:Negative societal impact}

This paper analyses and characterises the behaviour of LLMs. 
We try to understand whether ICL in LLMs follows Bayesian principles. 
As we outlined in \S \ref{sec:The martingale property enables a principled notion of uncertainty} this has important consequences for their potential use as trustworthy systems, which can be deployed in safety-critical, high-stakes applications such as healthcare. 
These systems often crucially rely on a principled notion of uncertainty. 
The evidence presented in this work cautions against the use of LLMs in such settings without further checks as they---under certain experimental settings---do not possess such a principled interpretation of uncertainty, rendering their uncertainty `black-box'.
Furthermore, while LLMs have typically been trained in non-exchangeable scenarios (e.g. natural language where the order of words or tokens changes meaning), 
as we showed in \S \ref{sec:martingale}, we caution against their use in exchangeable settings (e.g. i.i.d. in-context data) as their predictions can be rendered inconsistent.

The points noted above are potential negative societal impacts if Bayesian behaviour cannot be guaranteed by a model, as we argue in this work.
While we do not see any direct negative consequences from our analysis, we believe this work provides ample pointers and reason for further investigation of these concerns, and shall point out and warn against (potentially intended) misuse of LLMs.

\section{Code, Computational Resources, Datasets, Existing Assets Used}
\label{app:Code, computational resources, datasets, existing assets used}

\textbf{Code. }
We provide our code base on \href{https://github.com/meta-inf/bayes_icl}{\url{https://github.com/meta-inf/bayes_icl}} under MIT License, together with a \texttt{README.md} containing instructions on reproducing the key results in this paper.

\textbf{Datasets. }
We used three synthetic datasets for our experiments: a coin flip experiment, sampling from univariate Bernoulli distributions, a Gaussian experiment, sampling from univariate Gaussian distributions, and a synthetic natural language experiment, sampling (conditionally) from Bernoulli distributions.
We refer to \S \ref{sec:Experiments} and App. \ref{app:Additional experimental details and results} where they are introduced and discussed.

\textbf{Computational resources and APIs used. }
Referring to \S \ref{sec:Experiments}, we implemented \texttt{llama-2-7B} and \texttt{mistral-7B} with the Huggingface Transformer library \cite{wolf-etal-2020-transformers}, and  implemented \texttt{gpt-3}, \texttt{gpt-3.5} and \texttt{gpt-4} using the OpenAI API \cite{openai2023gpt4}.
For all Huggingface models, we generated the sampling paths by performing inference on a single A100 Nvidia GPU for each run.

\textbf{Existing assets used. }
Our work uses the following main software libraries and corresponding licenses:
PyTorch \cite{pytorch} (custom license), 
\texttt{numpy} \cite{harris2020array} (BSD 3-Clause License),
Weights\&Biases~\cite{wandb} (MIT License), 
Huggingface transformers library~\cite{wolf-etal-2020-transformers} (Apache License 2.0; model licenses see below),
\texttt{matplotlib} \cite{matplotlib} (PSF License),
\texttt{tqdm}   \cite{tqdm} (MPLv2.0 MIT License),
\texttt{scikit-learn} and \texttt{sklearn} \cite{scikit-learn} (BSD 3-Clause License),
\texttt{pandas} \cite{mckinney-proc-scipy-2010} (BSD 3-Clause License),
\texttt{openai} (Apache 2.0 License),  %
\texttt{tiktoken} (MIT License),   %
and \texttt{pickle} \cite{pickle} (License N/A).
We use Github Copilot  and ChatGPT \cite{openai2023gpt4} for code development and occasionally as a writing aid.

The five pretrained large language models we used (see \S \ref{sec:Experiments}) have the following licenses: 
\texttt{llama-2-7B} \cite{touvron2023llama} (custom license);
\texttt{mistral-7B} \cite{jiang2023mistral} (Apache 2.0 License); 
\texttt{gpt-3} \cite{brown2020language}, \texttt{gpt-3.5}, and \texttt{gpt-4} \citep{openai2023gpt4} (API; no code license).

\end{document}